\newtheorem{Theo}{Theorem}
\newtheorem{Lemm}{Lemma}
\theoremstyle{definition}
\theoremstyle{Remark}
\newtheorem{remark}{Remark}
\newtheorem{Defi}{Definition}
\begin{document}

\title{Capacity and Delay of Unmanned Aerial Vehicle Networks with Mobility}

\author{Zhiqing Wei,~Zhiyong Feng,~Haibo Zhou,~Li Wang,~and~Huici Wu
\thanks{

This work is supported by the National
Natural Science Foundation of China (No. 61631003, No. 61525101, No. 61601055).

Zhiqing Wei and Zhiyong Feng are with
Key Laboratory of Universal Wireless Communications, Ministry of Education,
School of Information and Communication Engineering,
Beijing University of Posts and Telecommunications, Beijing, 100876, China (e-mail: \{weizhiqing, fengzy\}@bupt.edu.cn).

Haibo Zhou is with the School of
Electronic Science and Engineering,
Nanjing University, Nanjing 210023, China
(e-mail: haibozhou@nju.edu.cn, h53zhou@uwaterloo.ca).

Li Wang is with Beijing Key Laboratory of Work Safety
Intelligent Monitoring,
School of Electronic Engineering, Beijing University
of Posts and Telecommunications, Beijing, 100876, China. She
is also with the Key Laboratory of the
Universal Wireless Communications, Ministry of Education, China (e-mail: liwang@bupt.edu.cn).

Huici Wu is with the National
Engineering Lab for Mobile Network Technologies,
Beijing University of Posts and Telecommunications,
Beijing 100876,
China (e-mail: dailywu@bupt.edu.cn).}}


\maketitle

\begin{abstract}
Unmanned aerial vehicles (UAVs)
are widely exploited in environment monitoring,
search-and-rescue, etc.
However,
the mobility and
short flight duration of UAVs
bring challenges for UAV networking.
In this paper, we study the UAV networks with
$n$ UAVs acting as aerial sensors.
UAVs generally have
short flight duration
and need to frequently get energy replenishment
from the control station.
Hence the returning UAVs
bring the data of
the UAVs along the returning paths to the control station
with a store-carry-and-forward (SCF) mode.
A critical range for the distance between
the UAV and the control
station is discovered.
Within the critical range, the
per-node capacity of the SCF mode is
$\Theta (\frac{n}{{\log n}})$ times higher
than that of the multi-hop mode.
However, the per-node capacity
of the SCF mode outside the critical range
decreases with the distance between the UAV and the control station.
To eliminate the critical range,
a mobility control scheme is proposed
such that the capacity scaling laws
of the SCF mode are the same for all UAVs,
which improves the capacity performance of UAV networks.
Moreover, the delay of the SCF mode is derived.
The impact of the size of the entire region, the velocity of UAVs, the number of UAVs and the flight duration of UAVs on the delay of SCF mode is analyzed.
This paper reveals that the mobility and
short flight duration of UAVs
have beneficial effects
on the performance of UAV networks,
which may motivate the study of SCF schemes for
UAV networks.
\end{abstract}
\begin{keywords}
Unmanned Aerial Vehicle; Scaling Laws;
Store-Carry-and-Forward; Capacity; Delay
\end{keywords}
\IEEEpeerreviewmaketitle

\section{Introduction}


UAVs are flexibly deployed in the challenging
environment to accomplish
tasks such as monitoring of air pollution or
toxic gas leakage \cite{UAV_Gas_Detection}.
As shown in Fig. \ref{fig_scenario},
UAVs act
as aerial sensors to collect
the data of air pollution and
transmit data to the control station.
Besides, UAVs can act as
aerial base stations (BSs)
to provide communication services
to the areas with natural disasters,
traffic congestions or concerts \cite{UAV_Disaster_Managaemnt, UAV_aerial_BS}.
UAVs can also act as aerial relays
for vehicular networks
to improve the reliability of wireless
links in vehicular networks \cite{UAV_VANET}.
Compared with single UAV system,
the UAV swarm can complete missions
in a more efficient and economical way \cite{UAV_Survey},
which makes UAV networking an emerging research field.

\begin{figure}[!t]
\centering
\includegraphics[width=0.55\textwidth]{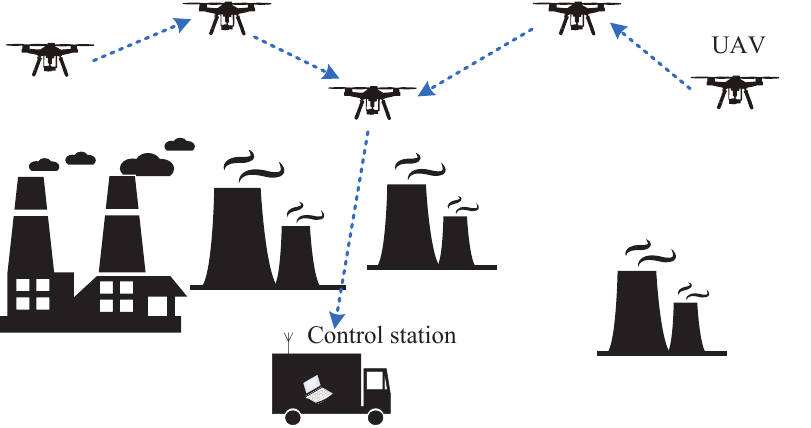}
\caption{The application scenario of UAVs.}
\label{fig_scenario}
\end{figure}

Generally, the studies of UAV networking mainly focus on
the construction of broadband/robust
UAV networks and the cooperation between
UAV networks and other networks, such as
vehicular networks, satellite networks, cellular networks.
For the construction of broadband/robust
UAV networks,
Xiao \emph{et al.} in \cite{UAV_mmWAVE} incorporated
millimeter-wave into UAV
networks to support high
data rate transmission.
Zhang \emph{et al.} in \cite{Cog_UAV}
studied the
spectrum sharing between UAV networks and ground cellular networks
to improve the spectrum efficiency.
Cai \emph{et al.} in \cite{UAV_protocol}
and Jiang \emph{et al.} in \cite{UAV_MAC_protocol}
designed the medium
access control (MAC) protocol to construct robust UAV
networks.
Sbeiti \emph{et al.} in \cite{UAV_routing_protocol}
studied the routing protocols for airborne mesh networks.
As to the cooperation between UAV networks and ground networks,
Zhou \emph{et al.} in \cite{UAV_Ground_Cooperation} exploited
multiple UAVs to act as aerial BSs
to support vehicular networks.
Sharma \emph{et al.} in \cite{Capacity_Het_UAV_Enhan}
adopted UAVs to enhance the
coverage and capacity of two-tier cellular networks.

In UAV networks, the mobility of UAVs,
the dynamic topology of UAV networks and
the short flight duration
will bring challenges for the design of network protocols
and resource scheduling schemes.
Facing these challenges,
the literatures \cite{UAV_mmWAVE}-\cite{Capacity_Het_UAV_Enhan} studied
the transmission schemes and protocols
of UAVs.
However, the
mobility and fluid topology of UAVs
have also beneficial effects on the
UAV networks with appropriate network controlling and design.
Mozaffari \emph{et al.} in \cite{UAV_D2D}
studied the scenario that a single UAV
acts as a flying BS for
an underlaid ad hoc network,
where the optimal altitude of UAV was derived
to maximize the system sum-rate.
They further optimized the density
and altitude of multiple UAVs to
maximize the coverage probability
of ground networks \cite{UAV_Coverage}.
Fadlullah \emph{et al.} in \cite{UAV_Trajectory} designed
a trajectory control algorithm
to improve end-to-end
connectivity and delay
of UAV assisted wireless networks.
Besides acting as relays or BSs,
UAVs can also act as mobile
sinks to gather
the data of sensor networks,
where appropriate mobility control
of UAVs can be designed to optimize the network performance.
Ergezer \emph{et al.} in \cite{Capacity_UAV_Sensor}
optimized the flight
paths of UAVs to maximize
the amount of collected data.
Say \emph{et al.} in \cite{UAV_Mobile_Sink}
exploited the mobility information of UAV to
assign different priorities to the nodes
within UAV's
coverage area to improve network capacity.
Lyu \emph{et al.} in \cite{UAV_DTN1}
took advantage of the cyclical trajectory of
fixed wing UAV to deliver
the data of
ground nodes.



The literatures \cite{UAV_D2D}-\cite{UAV_DTN1}
mainly focused on the mobility control algorithms to
improve the performance of UAV-assisted networks.
Actually, the intrinsic mobility pattern of UAVs
can also be exploited to improve the performance of UAV networks.
For example, the UAVs generally
have short flight duration
and they will frequently return to the control station
to get energy replenishment, such as charging batteries.
The returning UAVs can
carry data for the UAVs along
the returning paths to the control station with
an SCF transmission manner.
The network capacity can be boosted
with SCF manner \cite{ZWEI}.
However, there are rare studies
exploiting the intrinsic mobility pattern
of UAVs to improve the performance of UAV networks.

\begin{figure*}
\centering
\subfigure[{UAVs are distributed in 3D space.}]{
\label{fig_UAV_Net}
\includegraphics[width=0.48\textwidth]{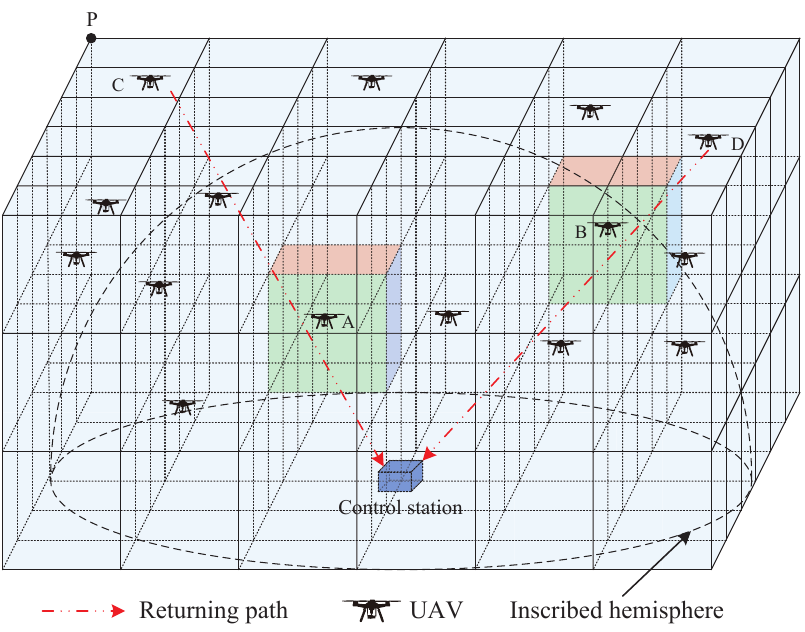}}
\subfigure[{UAVs are distributed in aerial 2D plane.}]{
\label{fig_UAV_Net_2D}
\includegraphics[width=0.48\textwidth]{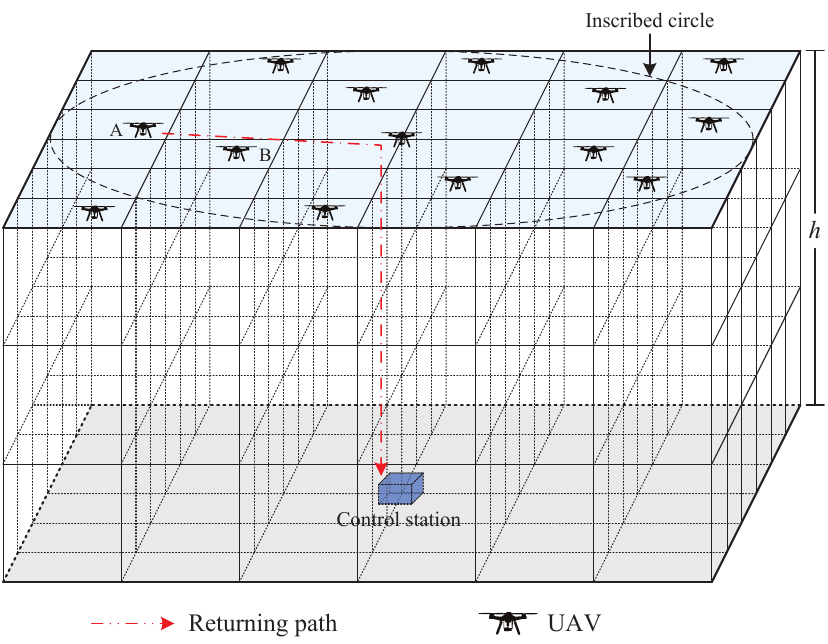}}
\caption{Network model of UAV networks}
\label{fig_network_model}
\end{figure*}


In this paper, the mobility
of UAVs is exploited to
construct the SCF transmission manner.
The scenario with UAVs acting as
aerial sensors is considered, where
the UAVs are uniformly distributed in a
three-dimensional (3D) space or an aerial two-dimensional (2D)
plane monitoring the environment.
The returning UAVs are exploited to implement the SCF scheme.
The capacity and delay scaling laws of UAV networks are
studied.
It is discovered that the capacity of SCF mode
is $\Theta (\frac{n}{{\log n}})$ times higher
compared with that of multi-hop mode.
The delay of UAV networks with SCF mode is derived.
The impact of the size of the entire region,
the velocity of UAVs, the number of UAVs and the
flight duration of UAVs on the delay of SCF mode is analyzed.
Besides, the critical range is discovered.
The capacity of the UAVs outside the
critical range is
smaller than that within the critical range,
which results in the inhomogeneity
of the capacity of UAV networks.
The per-node capacity of UAV network with SCF mode
is a piecewise function of the distance
between the UAV and the control station with the
critical as a threshold.
Hence a mobility control scheme is
proposed to eliminate the critical
range such that the capacity
scaling laws of all UAVs are the same.
The mobility and the short flight duration are the features of UAVs.
In common sense, the mobility
and the short flight duration of UAVs bring challenges for the
design of network protocols and
resource scheduling schemes. However, the mobility and the
short flight duration can also bring
benefits for UAV networks. The SCF mode proposed in this
paper turns the wasted returning time into treasure and makes
the UAV networks scalable.

The remainder of this paper is organized as follows.
The system model is introduced in Section II.
In Section III and Section IV, the
capacity and delay scaling laws of 3D
and 2D UAV networks are studied.
In Section V, the main results are discussed.
Finally, this paper is summarized in Section VI.
The key parameters and notations are listed in Table \ref{sys_para}.

\begin{table}[!t]
 \caption{\label{sys_para}Key Parameters and Notations}
 \begin{center}
 \begin{tabular}{l l}
 \hline
 \hline

    {Symbol} & {Description} \\

  \hline

  $n$ & Number of UAVs\\
  $v$ & Velocity of UAV\\
  $t_0$ & Flight duration of UAV\\
  $h$ &  Altitude of UAVs in 2D UAV networks\\
  $W_1$ & Channel of multi-hop mode in 3D UAV networks\\
  $W_2$ & Channel of SCF mode in 3D UAV networks\\
  $W_1^*$ & Channel of multi-hop mode in 2D UAV networks\\
  $W_2^*$ & Channel from central cell to control station\\
  $W_3^*$ & Channel of SCF mode in 2D UAV networks\\
  $\alpha$ & Path loss exponent\\
  $s_n$ & Side length of small cube in 3D UAV networks\\
  $\xi _n$ & Side length of small cell in 2D UAV networks\\
  $R_1$ & Per-hop capacity of 3D UAV networks\\
  $R_2$ & Per-hop capacity of 2D UAV networks\\
  ${\lambda _{SCF}}(n)$ & Per-node capacity of UAV networks with SCF mode\\
  $f( n ) = O(g(n))$ & $\mathop {\lim }\limits_{n \to \infty } \frac{{f(n)}}{{g(n)}} < \infty $\\
  $f\left( n \right) = \Omega (g(n))$ & $g( n ) = O(f(n))$\\
  $f( n ) = \Theta (g(n))$ & $f( n ) = O(g(n))$ and $g( n ) = O(f(n))$\\
  $f\left( n \right) = o(g(n))$ & $\mathop {\lim }\limits_{n \to \infty } \frac{{f(n)}}{{g(n)}} = 0$\\
  $f\left( n \right) = \omega (g(n))$ & $g\left( n \right) = o(f(n))$\\
  $f(n) \equiv g(n)$ & This notation denotes $f( n ) = \Theta (g(n))$\\
  SCF &  Store-carry-and-forward\\
  PDF & Probability density function\\
  \hline
  \hline
 \end{tabular}
 \end{center}
\end{table}

\section{System Model}

The distribution of UAVs relies on the
environment that UAVs are monitoring.
For example, if UAVs are monitoring the
air pollution, they will be distributed in
3D space.
When the UAVs are monitoring the targets on
ground, they may be
distributed in 2D plane.
In the literatures,
the distribution of UAVs is various.
For example,
\cite{UAV_Coverage} assumes
that the UAVs are distributed in
3D space.
While \cite{UAV_Same_Height}
assumes that the UAVs are distributed in
2D plane.
In order to study
the capacity and delay of UAV
networks comprehensively,
3D and 2D UAV networks,
which are illustrated in Fig. \ref{fig_UAV_Net} and Fig. \ref{fig_UAV_Net_2D},
are considered in this paper.

\subsection{Network Model}

\subsubsection{3D UAV Networks}

In Fig. \ref{fig_UAV_Net},
$n$ UAVs are uniformly
distributed in a cube.
The uniformly distributed UAVs
can monitor the environment without priori information.
A control station
is placed at the center of the cube's bottom surface.
The control station is
``responsible for dispatching, coordinating, charging
and collecting of UAVs'' \cite{ZWEI}.
UAVs are regarded as sensors and the
control station is the
sink node collecting the data from all UAVs.
When UAVs are about to exhaust
energy, they will return to the control station
to get energy replenishment.
The returning UAVs are applied to
implement the SCF scheme.
As illustrated
in Fig. \ref{fig_UAV_Net},
the returning UAV C can bring the data of UAV A
to the control station.

The flight velocity and flight duration of UAV are defined as $v$
and $t_0$, respectively\footnote{The values of
$v$ and $t_0$ are assumed to be constant for simplicity. However,
in practice, the values of $v$ and $t_0$ are influenced
by weather, wind, environment, flight attitude, etc.
In this paper, we omit these factors to
yields the capacity and delay
scaling laws of UAV networks.}.
The size of the cube is
$2s \times 2s \times s$,
where $s$ satisfies
the following inequality.
\begin{equation}\label{eq_side_length}
s < \frac{{\sqrt 3 }}{3}\frac{{{t_0}v}}{2}.
\end{equation}

When $s = \frac{{\sqrt 3 }}{3}\frac{{{t_0}v}}{2}$,
the distance between the
vertex P in Fig. \ref{fig_UAV_Net} and the
control station
is $\frac{{{t_0}v}}{2}$,
which is the maximum distance that a UAV can reach
to guarantee that all UAVs can return
within flight duration $t_0$.

Then we consider the process of UAV dispatching.
If UAVs are dispatched simultaneously at a moment,
UAVs will return simultaneously to get energy replenishment,
which brings heavy load to the control station.
If the UAVs are
sequentially dispatched,
they will also return sequentially, which
is beneficial for the control station.
Thus each UAV is dispatched at a time instant
uniformly distributed
in the time interval $[0,{t_0}]$.

\subsubsection{2D UAV Networks}

Similar to \cite{UAV_Same_Height},
UAVs are flying at a given
altitude.
In Fig. \ref{fig_UAV_Net_2D},
$n$ UAVs are uniformly
distributed on an aerial 2D
square plane with side length $2s$ and altitude $h$.
The control station is located on ground.
The SCF mode can be applied when
UAVs are distributed in an aerial 2D plane.
As illustrated in Fig. \ref{fig_UAV_Net_2D},
when a UAV returns to the control station, it
firstly flies to the center of the
plane. Then the returning UAV flies
from the center of
the plane to the control station.
With this kind of returning path, the returning UAVs
can be applied to implement the SCF scheme.
As illustrated in Fig. \ref{fig_UAV_Net_2D},
the returning UAV A can bring
the data of UAV B to the control
station.
The relation between $s$ and $h$ is
\begin{equation}
\sqrt 2 s + h < \frac{{{t_0}v}}{2}.
\end{equation}

Sequential dispatching and returning is also consider
in the 2D UAV networks to
relieve the load of the control station.
Hence each UAV is dispatched at a time instant
uniformly distributed
in the time interval $[0,{t'_0}]$ with ${t'_0} = {t_0} - \frac{2h}{v}$.

\subsection{Network Protocols}

\subsubsection{3D UAV Networks}
\label{subsec_protocol_3D}

In 3D UAV networks,
the multi-hop transmission mode
and the SCF mode coexist to
fully exploit transmission opportunities.
The multi-hop transmission mode
adopts straight-line
routing \cite{XFeng_Scaling_Law_Study,lemma2}.
It is noted that straight-line
routing is widely applied in
scaling law analysis because it can
provide analytical probabilistic model for the distribution of routings.
There are two nonoverlapping
wireless channels $W_1$ and $W_2$
for the multi-hop transmission mode and the SCF mode, respectively.

With the time division multiple access (TDMA) transmission scheme,
the cube in Fig. \ref{fig_UAV_Net} is split
into small cubes.
The side length of the small cubes is
${s_n} = {(c_1\frac{{\log n}}{n})^{1/3}}$,
where ${c_1} = 4{s^3}{c_0}$ with ${c_0} > 0$.
The small cubes are divided to organize
the data transmissions.
Only two UAVs in adjacent small cubes can transmit data to each other.
With straight-line routing,
the one hop transmission distance
is $2\sqrt 3 {s_n}$,
which means that a node can communicate with
the nodes in the adjacent 26 cubes.
With these configurations,
we have the following
lemma.

\begin{Lemm}(\cite{ZWEI, 3D_CRN})\label{lemma_number_of_node}
With $n$ UAVs uniformly
distributed in 3D
space, the number of UAVs
in a small cube with side length $s_n$ is
\begin{equation}
M = \Theta \left( {\log n} \right).
\end{equation}
\end{Lemm}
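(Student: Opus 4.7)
The plan is to treat this as a standard occupancy/concentration result for a uniform point process, then upgrade expectation to a high-probability statement via Chernoff bounds and a union bound over all small cubes.

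First I would compute the expected count. Since the cube has volume $4s^3$ and each of the $n$ UAVs is placed independently and uniformly, the number $M$ of UAVs falling in a given small cube of side $s_n$ is $\mathrm{Binomial}(n, p_n)$ with $p_n = s_n^3/(4s^3)$. Plugging in $s_n^3 = c_1 \log n / n$ and $c_1 = 4s^3 c_0$ gives $p_n = c_0 \log n / n$, so $\mathbb{E}[M] = c_0 \log n$. This already pins down the correct order; the remaining work is to show that every small cube has $\Theta(\log n)$ UAVs with probability tending to $1$.

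Next I would apply a multiplicative Chernoff bound. For any fixed $\delta \in (0,1)$,
\begin{equation}
\Pr\bigl(|M - c_0 \log n| > \delta c_0 \log n\bigr) \le 2 \exp\!\bigl(-\tfrac{\delta^2}{3} c_0 \log n\bigr) = 2\, n^{-\delta^2 c_0/3}.
\end{equation}
The number of small cubes tiling the region is $N_n = 4s^3/s_n^3 = n/(c_0 \log n)$, which is $O(n)$. By the union bound, the probability that some small cube deviates from its mean by more than a factor of $\delta$ is at most $2 n \cdot n^{-\delta^2 c_0 / 3} = 2 n^{1 - \delta^2 c_0/3}$. Choosing $c_0$ large enough (equivalently, fixing the constant $c_0$ in $c_1 = 4 s^3 c_0$ so that $\delta^2 c_0/3 > 1$) makes this bound vanish as $n \to \infty$. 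Hence, with probability $1 - o(1)$, every small cube simultaneously contains between $(1-\delta)c_0 \log n$ and $(1+\delta) c_0 \log n$ UAVs, i.e.\ $M = \Theta(\log n)$.

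I expect the only mild obstacle to be bookkeeping around the boundary: the tiling by cubes of side $s_n$ may not divide $2s \times 2s \times s$ exactly, so a handful of edge cubes are truncated. This is harmless because their volumes are still $\Theta(s_n^3)$ (or can be merged into a neighbor), and since there are only $O(N_n^{2/3})$ such boundary cells the union bound absorbs them without change. The argument is essentially identical to the 3D cognitive radio network setting cited as \cite{3D_CRN}, so I would also note this in the proof and refer the reader there for the analogous details.
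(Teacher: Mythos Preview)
Your argument is correct: the expected count is $c_0\log n$, the Chernoff bound gives per-cube deviation probability $n^{-\Theta(1)}$ once $c_0$ is chosen large enough, and the union bound over $O(n)$ cubes finishes the uniform $\Theta(\log n)$ statement. The paper, however, does not prove this lemma at all; its ``proof'' consists of a single sentence pointing to Lemma~1 of \cite{ZWEI} (and implicitly \cite{3D_CRN}). So you have supplied the underlying concentration argument that the paper simply outsources to a citation. What you gain is a self-contained proof and an explicit dependence on $c_0$ (making clear that the constant in $s_n$ must be taken large enough for the union bound to close); what the paper gains is brevity. Your remark about boundary cubes is a nice piece of hygiene that the cited references typically gloss over, and your closing reference to the 3D cognitive radio setting is exactly in the spirit of the paper's own proof.
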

\begin{proof}
This lemma is the Lemma 1 in \cite{ZWEI}.
\end{proof}

According to Lemma \ref{lemma_number_of_node},
there exist
UAVs in each small cube.
Hence the connectivity of
straight-line routing can be guaranteed.
The 3D UAV networks adopt 27-TDMA scheme \cite{ZWEI, Pan_Li_3D_Capacity_Infocom}.
As illustrated in Fig. \ref{fig_TDMA},
a cluster consists 27 small cubes.
With the entire frame
split into 27 time slots,
the UAVs in each active cube share
one time slot to transmit data.
In addition, the cubes within one cluster
take turns to be active
in a round-robin manner. In such way,
each UAV can take a time slot to transmit data.
The transmit power
is set as ${P_0}s_n^{\alpha}$
to guarantee the
transmission distance of $2\sqrt 3 {s_n}$,
where $\alpha$ is the path loss exponent
and $P_0$ is a constant.
UAV A can transmit data to the UAVs in the
adjacent 26 cubes. The neighborhood of UAV A is defined as
the cluster
consisting 27 cubes
with UAV A in the central cube.
If a returning UAV passes
the neighborhood of UAV A,
it can assist UAV A
to deliver data to the control station.
When multiple UAVs pass
the neighborhood of UAV A simultaneously,
only one returning UAV receives the data
from UAV A.

\begin{figure}[!t]
\centering
\includegraphics[width=0.48\textwidth]{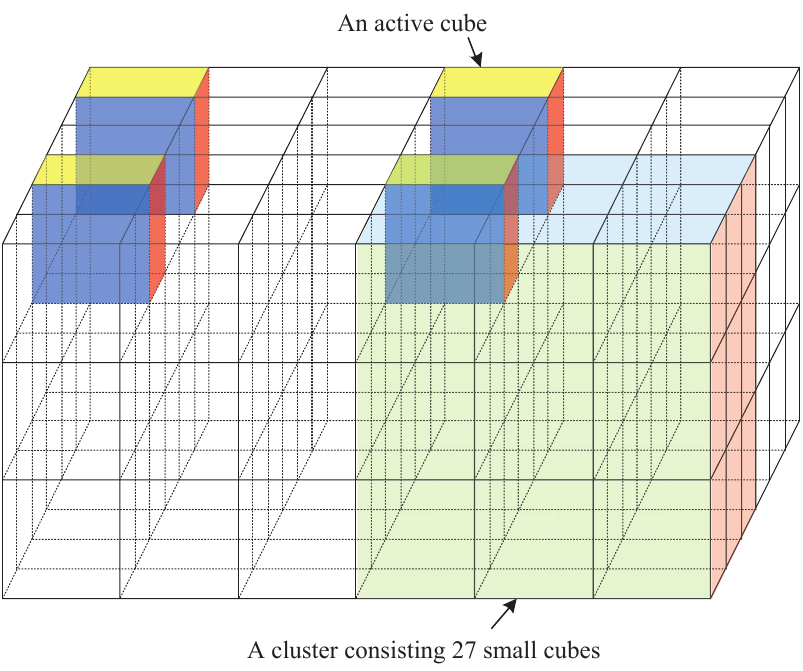}
\caption{27-TDMA scheme.}
\label{fig_TDMA}
\end{figure}

\begin{figure*}[!t]
\centering
\includegraphics[angle=270, width=0.99\textwidth]{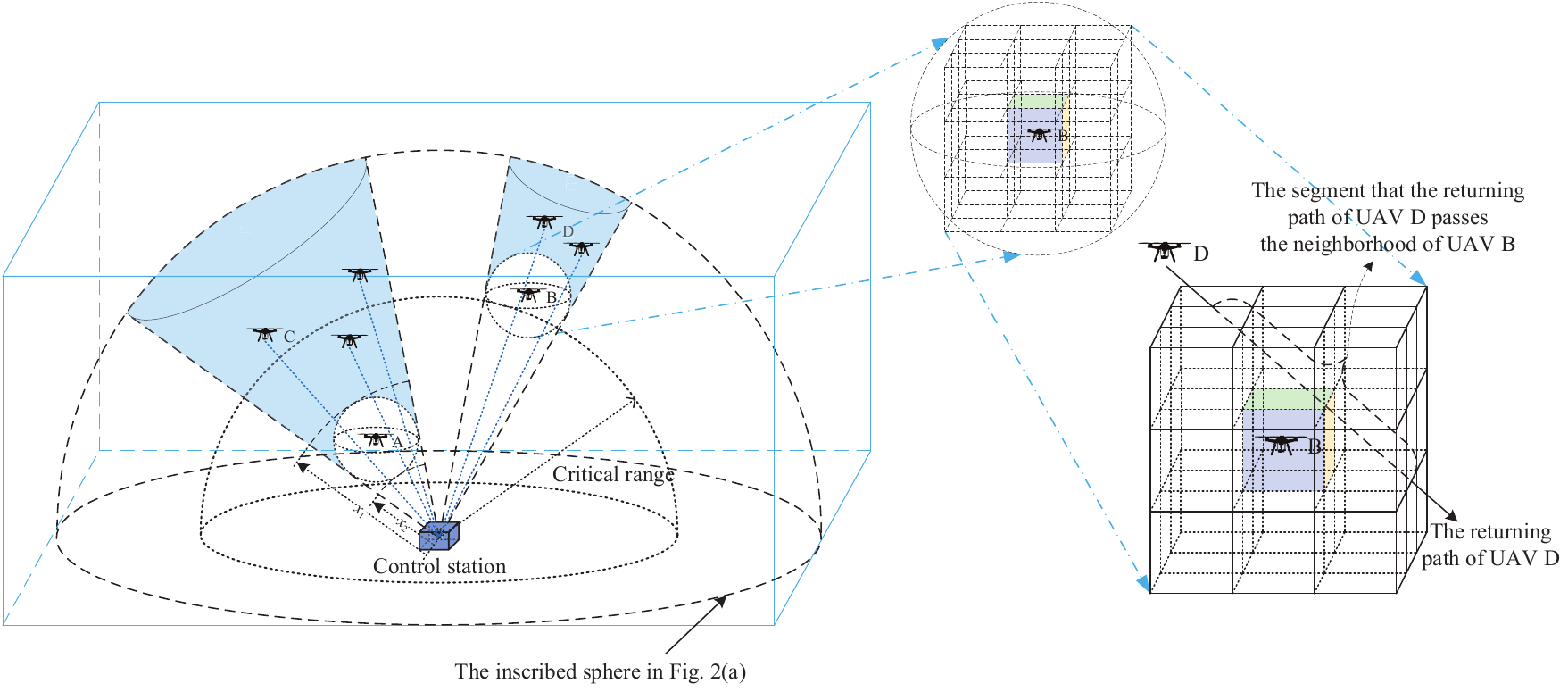}
\caption{UAV networks in 3D space.}
\label{fig_banqiu}
\end{figure*}

\subsubsection{2D UAV Networks}
\label{sec_net_protol_2D}

In 2D UAV networks,
there are three nonoverlapping
channels $W_1^*$, $W_2^*$ and $W_3^*$.
In multi-hop mode,
data is transmitted to the
central cell of the aerial 2D plane
over channel $W_1^*$ with
straight-line routing \cite{Kumar, XFeng_Scaling_Law_Study}.
Then the central cell transmits the data
to the control station over channel $W_2^*$.
In SCF mode, data is transmitted over channel $W_3^*$.
With the TDMA scheme,
the entire square is split into
small cells with side length
${\xi _n} = {(c_3\frac{{\log n}}{n})^{1/2}}$, where
${c_3} = 4{s^2}{c_2}$ with ${c_2} > 0$.
Similar to Lemma \ref{lemma_number_of_node},
a small cell contains $\Theta \left( {\log n} \right)$ UAVs
with high probability (\emph{w.h.p.}).

There are UAVs in each small cell and
the connectivity of straight-line routing is
guaranteed.
The transmit power
is set as ${P_1}{\xi}_n^{\alpha}$ to guarantee the
transmit distance of $2\sqrt 2 {\xi _n}$,
where $P_1$ is a constant.
Hence a UAV can transmit data to another UAV in one of the
adjacent 8 small cells.
We define the cluster
consisting the 9 cells
with UAV B in the central cell
as the neighborhood of UAV B.
Then, if a returning UAV flies through
the neighborhood of UAV B,
it can assist UAV B
to deliver data to the control station.
The 2D UAV networks adopt the 9-TDMA scheme \cite{XFeng_Scaling_Law_Study}\footnote{Letting $M=3$ in Chapter 5 of \cite{XFeng_Scaling_Law_Study}, the multiple access scheme in \cite{XFeng_Scaling_Law_Study}
becomes 9-TDMA.}.

\subsection{Definitions}
The per-node throughput and capacity of a UAV
network are defined as follows.

\begin{Defi}(\cite{XFeng_Scaling_Law_Study}\cite{Pan_Li_3D_Capacity_Infocom})
\label{def_throughput}
Per-node Throughput: The per-node throughput of $\lambda(n)$ bits
per second for a UAV network with $n$ UAVs
is feasible if there exists a scheduling scheme that every UAV
can transmit data to the destination with data rate $\lambda(n)$.
\end{Defi}

The per-node throughput capacity of a UAV network
is defined as follows.

\begin{Defi}(\cite{XFeng_Scaling_Law_Study}\cite{Pan_Li_3D_Capacity_Infocom})
\label{def_capacity}
Per-node Throughput Capacity:
The per-node throughput capacity for a UAV network with $n$ UAVs is the order of
$\Theta \left( {f(n)} \right)$ bits
per second if there are deterministic positive
constants $c < c'$ such that
\begin{equation}
\mathop {\lim }\limits_{n \to \infty } \Pr \left( {\lambda (n) = cf(n){\kern 3pt} {\rm{is}}{\kern 3pt} {\rm{feasible}}} \right) = 1,
\end{equation}
while
\begin{equation}
\mathop {\lim }\limits_{n \to \infty } \Pr \left( {\lambda (n) = c'f(n){\kern 3pt} {\rm{is}}{\kern 3pt} {\rm{feasible}}} \right) < 1.
\end{equation}
\end{Defi}

Without causing confusion, the per-node throughput capacity is
called per-node capacity for short in this paper.

\section{Scaling Laws: 3D UAV Networks}

\subsection{The Capacity of UAV Networks with Multi-Hop Mode}

With multi-hop mode,
all UAVs transmit data to
the control station.
Hence the capacity of multi-hop mode is
determined by the data rate of each UAV.
The data rate of each node in
3D wireless ad hoc networks is as follows.

\begin{Lemm}(\cite{ZWEI, Pan_Li_3D_Capacity_Infocom})\label{lemm_capacity_link_pair}
In 3D wireless ad hoc networks,
the data rate of each node is
\begin{equation}
R_1 \ge \left\{ {\begin{array}{*{20}{c}}
{\frac{1}{{27}} \Theta ({n^{\frac{1}{3}(\alpha  - 3)}})}&{2 < \alpha  < 3}\\
{\frac{1}{{27}} \Theta (\frac{1}{{\log n}})}&{\alpha  = 3}\\
{\frac{1}{{27}} \Theta (1)}&{\alpha  > 3}
\end{array}} \right..
\end{equation}
\end{Lemm}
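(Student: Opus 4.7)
The plan is to establish $R_1$ by computing the signal-to-interference-plus-noise ratio (SINR) at a representative receiver under the 27-TDMA schedule of Section \ref{subsec_protocol_3D} and then invoking Shannon's formula, with the prefactor $\tfrac{1}{27}$ absorbing the fraction of the TDMA frame assigned to any single cube in a cluster of 27. Two structural facts drive the analysis: the transmit power $P_0 s_n^{\alpha}$ is tuned to the hop distance so that the received signal power is a constant independent of $n$, and the 27-TDMA schedule forces concurrent transmitters to sit on a sub-lattice of spacing $3 s_n$.

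First I would lower-bound the received signal power. Since the transmitter and its intended receiver lie in adjacent small cubes, they are separated by at most $2\sqrt{3}\,s_n$, so the received signal power is at least
\begin{equation}
\frac{P_0 s_n^{\alpha}}{(2\sqrt{3}\,s_n)^{\alpha}} \;=\; \frac{P_0}{(2\sqrt{3})^{\alpha}},
\end{equation}
which is $\Theta(1)$. Next I would upper-bound the interference by grouping the other concurrently active cubes into tiers indexed by $k = 1, 2, \ldots, K$, where tier $k$ lies at distance $\Theta(k s_n)$ from the receiver and contains $\Theta(k^2)$ cubes. Each such cube hosts one active transmitter contributing interference power $\Theta\!\left(P_0 s_n^{\alpha}/(k s_n)^{\alpha}\right) = \Theta(k^{-\alpha})$. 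The maximum tier index is set by the network diameter: the region has volume $4 s^{3}$ and each small cube has volume $s_n^{3} = \Theta(\log n / n)$, so that $K = \Theta\!\left((n/\log n)^{1/3}\right)$.

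The interference then reduces to evaluating $\sum_{k=1}^{K} k^{2-\alpha}$, which by integral comparison is $\Theta(K^{3-\alpha})$ for $2 < \alpha < 3$, $\Theta(\log K) = \Theta(\log n)$ for $\alpha = 3$, and $\Theta(1)$ for $\alpha > 3$. Substituting into $\mathrm{SINR} = \text{signal}/(\text{noise} + \text{interference})$, using $\log(1+x) = \Theta(x)$ when $x = o(1)$ and $\log(1+x) = \Theta(1)$ when $x = \Theta(1)$, and attaching the $\tfrac{1}{27}$ slot fraction yields the three cases of the lemma, with the polynomial $n^{(\alpha-3)/3}$ behavior in the first regime arising from $K^{3-\alpha}$ after absorbing the polylog factor.

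The main obstacle, in my view, is not the algebra but verifying that the tier count $\Theta(k^2)$ and the distance lower bound $\Theta(k s_n)$ hold uniformly for receivers close to the boundary of the cuboid region, where outer tiers are only partially populated; this is routinely handled by observing that missing interferers can only strengthen the SINR lower bound, so the stated scaling remains a valid lower bound on $R_1$. Since the statement is attributed to \cite{ZWEI, Pan_Li_3D_Capacity_Infocom}, the formal proof reduces to aligning notation with those references once the computation above makes the $\alpha$-dependence transparent.
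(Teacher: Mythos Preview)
Your proposal is correct and in fact supplies more than the paper does: the paper provides no proof of this lemma at all, merely citing \cite{ZWEI, Pan_Li_3D_Capacity_Infocom} as the source. The tier-based SINR analysis you sketch---constant received signal power from the $P_0 s_n^{\alpha}$ power scaling, interference summed over concentric shells with $\Theta(k^2)$ active cubes at distance $\Theta(k s_n)$, and the resulting $\sum_{k} k^{2-\alpha}$ governing the three $\alpha$-regimes---is precisely the argument used in those references, so your write-up is both sound and faithful to the literature the paper invokes.
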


According to Lemma \ref{lemm_capacity_link_pair},
the capacity of UAV networks with multi-hop mode
is $\Theta \left( {{R_1}} \right)$.

\subsection{The Capacity of UAV Networks with SCF Mode}
\label{sec_finite_t0}

In this section,
the lower bound
of the capacity of UAV networks with SCF mode is studied.
An inscribed hemisphere with radius $L = s$
in Fig. \ref{fig_UAV_Net}
is adopted to estimate
the lower bound of the
number of returning UAVs.
Similarly,
a circumscribed hemisphere of
the cube with radius $L = \sqrt 3 s$ can be used to
derive the upper bound of the capacity of UAV networks with SCF mode.

In Fig. \ref{fig_banqiu},
when the UAVs in the shaded region of
the spherical sector return,
they will carry the data of UAV A to the control station.
To simplify the analysis,
the circumscribed sphere
of the neighborhood of a UAV
is adopted to replace the neighborhood of the UAV.
Fig. \ref{fig_banqiu} shows the
circumscribed sphere of the neighborhood of UAV B.
Denote $x$ as
the distance between
the neighborhood of UAV A and the control station.
Assuming $k$
UAVs are contained
in the shaded region of UAV A,
$k$ is a decreasing function with respect to $x$.
The capacity of UAV networks with SCF mode is shown in the
following theorem.

\begin{Theo}\label{lemm_critical_range}
In 3D UAV networks,
there exist $x_1^*$ and $x_2^*$ as follows.
\begin{equation}\label{eq_critical_range_accurate}
\begin{aligned}
& x_1^* = \frac{1}{3}\frac{{{2^{1/3}}{L^6}{u^2}}}{{{{\left( {27{L^3} - 2{L^9}{u^3} + 3{L^3}\sqrt {3(27 - 4{L^6}{u^3})} } \right)}^{1/3}}}}\\
& + \frac{1}{3}\frac{1}{{{2^{1/3}}}}{\left( {27{L^3} - 2{L^9}{u^3} + 3{L^3}\sqrt {3(27 - 4{L^6}{u^3})} } \right)^{1/3}} \\
& - \frac{1}{3}{L^3}u = \Theta \left( 1 \right),
\end{aligned}
\end{equation}
\begin{equation}\label{eq_x_2_star}
x_2^* = L - \Theta \left( {\frac{1}{{{{(\log n)}^2}}}} \right) + o\left( {\frac{1}{{{{(\log n)}^3}}}} \right),
\end{equation}
where $u = \frac{{16{t_0}v}}{{9\pi {c_1}{c_4}\log n}}\left( {\log \frac{{{t_0}v}}{{c_1^{1/3}{c_4}}} + \frac{1}{3}\log n - \frac{1}{3}\log \log n} \right)$.
$x_1^*$ is defined as the critical range.
When $x \le x_1^*$, the order of the per-node
capacity of 3D UAV networks with SCF mode is
\begin{equation}\label{eq_capacity_7}
{\lambda _{SCF}}(n) = \Theta \left( {\frac{{{R_1}}}{{\log n}}} \right)
\end{equation}
w.h.p.
When  $x \ge x_2^*$, the order of the expectation of
${\lambda _{SCF}}(n)$ is
\begin{equation}\label{eq_clear_form}
E\left[ {{\lambda _{SCF}}(n)} \right] = \Theta \left( {\frac{{1 - {{\left( {\frac{x}{L}} \right)}^3}}}{{{x^2}}}\frac{1}{{{t_0}}}{R_1}} \right),
\end{equation}
where $E[*]$ denotes the expectation of *.
\end{Theo}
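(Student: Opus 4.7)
The plan is to derive the two regime statements by (i) computing the expected number $k(x)$ of returning UAVs whose straight-line return paths pass through the neighborhood of UAV~A, (ii) translating $k(x)$ into a per-node SCF rate via the 27-TDMA schedule together with Lemma~\ref{lemma_number_of_node}, and (iii) locating the threshold in $x$ at which the one-hop bottleneck $R_{1}$ is saturated.

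First I would replace the neighborhood of UAV~A by its circumscribed sphere of radius $\rho=\Theta(s_{n})$, as suggested by Fig.~\ref{fig_banqiu}, so that the helper UAVs fill a spherical cone with apex at the control station, passing through this sphere, and bounded between radii $x$ and $L=s$. The cone has solid angle $\Theta(\rho^{2}/x^{2})$ and therefore shaded-region volume $V_{\mathrm{sh}}(x)=\Theta(s_{n}^{2}(L^{3}-x^{3})/x^{2})$. Multiplying by the UAV density $\Theta(n/L^{3})$ and substituting $s_{n}=(c_{1}\log n/n)^{1/3}$ gives
\begin{equation*}
k(x) = \Theta\!\left( \frac{n^{1/3}(\log n)^{2/3}(L^{3}-x^{3})}{L^{3}x^{2}} \right).
\end{equation*}
Because each helper is dispatched at a time uniform in $[0,t_{0}]$ and traverses UAV~A's neighborhood in time $\tau=\Theta(s_{n}/v)$, the expected fraction of time the neighborhood is occupied by at least one returner is $p(x)=\Theta(k(x)\tau/t_{0})$. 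Since UAV~A shares its cube with $\Theta(\log n)$ other UAVs by Lemma~\ref{lemma_number_of_node}, the event ``a returning UAV is available to A'' holds w.h.p.\ exactly when $p(x)=\Omega(1)$.

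Setting $p(x_{1}^{*})=\Theta(1)$ reduces the threshold to a cubic of the shape $(x_{1}^{*})^{3}+uL^{3}(x_{1}^{*})^{2}-L^{3}=0$, whose Cardano root is precisely~(\ref{eq_critical_range_accurate}); the $\log\log n$ correction inside the expression for $u$ comes from a sharper Chernoff analysis of the Poissonized concurrent-returner count, with the $\log(t_{0}v/(c_{1}^{1/3}c_{4}))$ piece encoding the deviation scale. In this regime UAV~A hands off data at the one-hop rate $R_{1}$ almost continuously while sharing the SCF channel $W_{2}$ with its $\Theta(\log n)$ cube-mates, yielding $\lambda_{SCF}(n)=\Theta(R_{1}/\log n)$ w.h.p., which is~(\ref{eq_capacity_7}). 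For $x\ge x_{2}^{*}$, by contrast, returners are sparse enough that the SCF rate is limited by passage frequency rather than by the one-hop channel, so I would compute $E[\lambda_{SCF}(n)]$ directly: each passage delivers $\Theta(R_{1}\tau/\log n)$ bits and passages occur at rate $k(x)/t_{0}$, giving
\begin{equation*}
E[\lambda_{SCF}(n)] = \Theta\!\left( \frac{k(x)\,\tau\, R_{1}}{t_{0}\log n} \right) = \Theta\!\left( \frac{1-(x/L)^{3}}{x^{2}}\cdot\frac{R_{1}}{t_{0}} \right)
\end{equation*}
after absorbing $v$ and the other order-one constants; this is~(\ref{eq_clear_form}). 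The value $x_{2}^{*}=L-\Theta(1/(\log n)^{2})$ is simply the largest $x$ at which $k(x)$ is still $\omega(1)$, so that this expectation analysis remains tight at the right-hand boundary of the cube.

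The hard part will be aligning three coupled concentration arguments---a Chernoff bound on the binomial count of helpers in the spherical cone, a Poisson-thinning bound on the concurrent-passage process during $[0,t_{0}]$, and the $\Theta(\log n)$ per-cube count from Lemma~\ref{lemma_number_of_node}---so that the cubic $(x_{1}^{*})^{3}+uL^{3}(x_{1}^{*})^{2}=L^{3}$ emerges with exactly the $u$ stated in the theorem, including the $\log\log n$ correction. Once $x_{1}^{*}$ is algebraically pinned down via Cardano's formula, both regime statements~(\ref{eq_capacity_7}) and~(\ref{eq_clear_form}) follow by direct substitution of $s_{n}$ and $R_{1}$.
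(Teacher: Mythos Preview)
Your geometric computation of $k(x)$ and the general two-regime structure are correct and match the paper. The genuine gap is in your threshold criterion. You set $p(x_{1}^{*})=\Theta(1)$, i.e.\ $k(x_{1}^{*})\tau/t_{0}=\Theta(1)$, which is the condition that a returner is present at a \emph{typical} instant. But the claim $\lambda_{SCF,\mathrm{cube}}(n)=R_{1}$ w.h.p.\ requires a returner to be present at \emph{every} instant, i.e.\ every spacing between consecutive return epochs is at most $\tau$. The paper obtains this via the order-statistics spacing distribution $g(w)=\tfrac{k}{t_{0}}(1-w/t_{0})^{k-1}$: writing $k_{th}=t_{0}/\tau$, one has $\Pr\{w\le\tau\}=1-(1-1/k_{th})^{k}\to 1$ iff $k\ge k_{th}\log k_{th}$, and $\to 0$ iff $k\le k_{th}/\log k_{th}$. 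The correct saturation threshold is therefore $k(x_{1}^{*})=k_{th}\log k_{th}$, not $k(x_{1}^{*})=\Theta(k_{th})$.

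This is not a cosmetic correction. Your criterion yields $u'=\tfrac{16t_{0}v}{9\pi c_{1}c_{4}\log n}=\Theta(1/\log n)$, whereas the theorem's $u$ equals $u'\cdot\log k_{th}$ with $\log k_{th}=\log\tfrac{t_{0}v}{c_{1}^{1/3}c_{4}}+\tfrac{1}{3}\log n-\tfrac{1}{3}\log\log n$, so that $u=\Theta(1)$. With $u'\to 0$ the cubic root tends to $L$, contradicting $x_{1}^{*}=\Theta(1)$ strictly inside the hemisphere. So the entire $\log k_{th}$ factor---of which the $\log\log n$ term is only a sub-correction---comes from the maximal-spacing analysis, not from a Chernoff bound on the helper count as you suggest. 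The same mechanism fixes $x_{2}^{*}$: it is defined by $k(x_{2}^{*})=k_{th}/\log k_{th}$ (the regime where spacings exceed $\tau$ w.h.p., so passages never overlap and the expectation formula~(\ref{eq_clear_form}) is exact), not by ``$k(x)$ is still $\omega(1)$,'' which would place $x_{2}^{*}$ within $\Theta(n^{-1/3}(\log n)^{-2/3})$ of $L$ rather than the stated $L-\Theta((\log n)^{-2})$. Once you replace your occupancy criterion by the order-statistics thresholds $k_{th}\log k_{th}$ and $k_{th}/\log k_{th}$, the rest of your outline goes through.
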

\begin{proof}
The proof is provided in Appendix \ref{app_critical_3D}.
\end{proof}

\begin{remark}
The per-node capacity ${{\lambda _{SCF}}(n)}$
is a random variable because of the randomness of the network.
We declare that ${{\lambda _{SCF}}(n)}$ is $\Theta \left( {\frac{{{R_1}}}{{\log n}}} \right)$ w.h.p.
if $\mathop {\lim }\limits_{n \to \infty } \Pr \left\{ {{\lambda _{SCF}}(n) = \Theta \left( {\frac{{{R_1}}}{{\log n}}} \right)} \right\} = 1$,
which is derived from Definition \ref{def_capacity}.
\end{remark}

When $x \le x_1^*$, the order of
the per-node capacity of 3D UAV networks with SCF mode is (\ref{eq_capacity_7}),
which is the data rate of each node in Lemma \ref{lemm_capacity_link_pair}
shared by the UAVs within a small cube.
When $x \le x_1^*$, there are multiple UAVs passing through the neighborhood of the UAV \emph{w.h.p.} at any time.
However, there is only one passing UAV
communicating with the UAV.
Hence the decrease of $x$ within the critical range
will not increase the per-node capacity of UAV network with SCF mode.
When $x \le x_1^*$, the per-node capacity of UAV network with SCF mode is
${{\lambda _{SCF}}(n)} = \Theta \left( {\frac{{{R_1}}}{{\log n}}} \right)$,
which is highest.
However, when $x > x_1^*$,
the highest per-node capacity of $\Theta \left( {\frac{{{R_1}}}{{\log n}}} \right)$
can not be guaranteed.
When $x \ge x_2^*$,
since the time intervals that the returning UAVs flying through the
neighborhood of a UAV are not overlapped, the per-node capacity of
UAV network with SCF mode has clear form.
Hence the $E\left[ {{\lambda _{SCF}}(n)} \right]$ in this case
is provided in (\ref{eq_clear_form}).
When ${x_1} < x < {x_2}$,
the form of the expectation
of $E\left[ {{\lambda _{SCF}}(n)} \right]$ is complex.
The critical range $x_1^*$
is a constant, which can not be eliminated via increasing
the number of UAVs.
Note that the ${{\lambda _{SCF}}(n)}$
within the critical range
is maximum.
The ${{\lambda _{SCF}}(n)}$ outside the
critical range decreases with the increase of $x$.

The capacity of SCF mode
for the UAVs outside the
critical range is smaller than
that within the critical range.
Thus for the UAVs outside the critical range,
the mobility of these UAVs needs to be controlled
to improve the per-node capacity with SCF mode.
In Section \ref{sec_3D_mobility_control},
a mobility control scheme
is designed for capacity improvement.

\subsection{Flight Trajectories Control for Capacity Improvement}
\label{sec_3D_mobility_control}

The per-node capacity of UAV networks with SCF mode
relies on the distance between the UAV and the control station.
When a UAV is outside of the critical range, the capacity with SCF mode
is limited. Hence the flight trajectories of the UAVs
outside the critical range can be controlled to
improve the capacity with SCF mode.
As illustrated in Fig. \ref{fig_UAV_Mobility},
when a UAV is about to return to the control station,
firstly, the UAV flies upward with distance $J/2$.
Secondly, the UAV flies downward with
distance $J$.
Thirdly,
the UAV flies upward with distance $J/2$
and reaches the original spot.
Finally, the UAV returns to
the control station.

\begin{figure}[!t]
\centering
\includegraphics[width=0.38\textwidth]{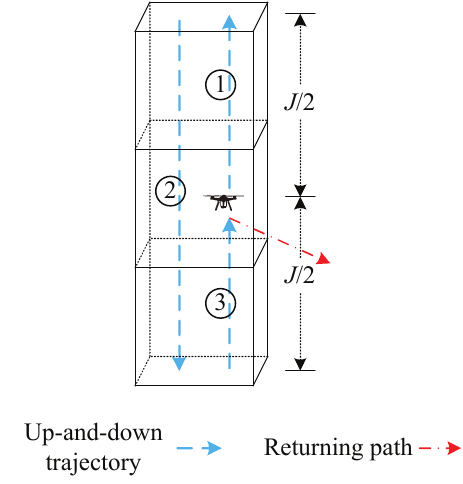}
\caption{The flight trajectory of a UAV with mobility control in 3D UAV networks.}
\label{fig_UAV_Mobility}
\end{figure}

Notice that
the flight trajectories
denoted by the dotted lines in Fig. \ref{fig_UAV_Mobility}
are designed to collect the data of the UAVs along the
flight trajectories,
which are called ``up-and-down trajectories''.
With the ``up-and-down trajectories'',
a UAV outside of the critical range will
encounter more UAVs passing through its neighborhood.
Thus there are more transmission opportunities
with mobility control,
which will improve the capacity of UAV networks with SCF mode.
The capacity with SCF mode for the UAVs outside the critical range
is improved if $J$ is a constant, as in the following theorem.

\begin{Theo}\label{th_mobility_control_1}
The UAVs outside the critical range
fly following the up-and-down trajectories before they return.
If $J$ is a constant, the order of
the per-node capacity
of SCF mode for the UAVs outside the critical range is
\begin{equation}\label{eq_th_UAV_mobility_1_1}
\lambda_{SCF}(n) = \Theta \left( {\frac{{{R_1}}}{{\log n}}} \right).
\end{equation}
\end{Theo}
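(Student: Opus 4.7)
The plan is to reduce the situation of a sensor UAV A outside the critical range to the favorable situation inside the critical range that was already analyzed in Theorem \ref{lemm_critical_range}. Recall that inside the critical range the bound $\Theta(R_1/\log n)$ arises because at every instant the number of returning UAVs overlapping A's neighborhood is so large that only TDMA sharing inside the small cube—which contains $\Theta(\log n)$ UAVs by Lemma \ref{lemma_number_of_node}—limits the per-node rate. The up-and-down trajectory is engineered precisely to reproduce this ``at least one returning UAV is always available'' situation for a far-away sensor, by enlarging the subset of returning UAVs that each far-away sensor can see.

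The first concrete step is a geometric/counting argument. Fix a sensor UAV A at distance $x>x_1^*$ and identify the returning UAVs whose up-and-down segment intersects A's neighborhood. Since the up-and-down is a vertical segment of length $2J$ centered at the returning UAV's original position, these are exactly the UAVs whose original position lies in a vertical cylinder of radius $\Theta(s_n)$ and height $\Theta(J)$ around A. Using uniform spatial distribution and the normalization $ns_n^3=\Theta(\log n)$, the expected number of such UAVs present in that cylinder at any time is $\Theta(n s_n^2 J/s^3)=\Theta(J\,n^{1/3}(\log n)^{2/3})$. Under the uniform dispatching in $[0,t_0]$ those UAVs start returning at a uniform rate, and each of them spends time $\Theta(s_n/v)$ traversing A's neighborhood during its up-and-down. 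A Little's-law / occupancy computation (the same one implicit in Appendix \ref{app_critical_3D}) then gives an expected number of returning UAVs currently inside A's neighborhood equal to
\begin{equation}
\Theta\!\left(\frac{n s_n^2 J}{s^3}\cdot\frac{s_n}{v}\cdot\frac{1}{t_0}\cdot t_0\right)=\Theta\!\left(\frac{J}{v t_0}\log n\right),
\end{equation}
which for constant $J,v,t_0$ is $\Theta(\log n)$—the same order as inside the critical range.

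The second step is a concentration/TDMA argument. Since the dispatching times of distinct UAVs are independent and uniform on $[0,t_0]$, the count above is a sum of independent indicators, so a Chernoff-type bound of the kind used in the proof of Theorem \ref{lemm_critical_range} yields, together with a union bound over time slots and over all UAVs outside the critical range, that A's neighborhood contains $\Omega(\log n)$ returning UAVs w.h.p.\ at every active slot. Consequently A always has a returning UAV to hand off to on channel $W_2$, and the effective bottleneck becomes the 27-TDMA scheme of Section \ref{subsec_protocol_3D}, in which the $\Theta(\log n)$ UAVs sharing the active small cube divide the per-hop rate $R_1$ of Lemma \ref{lemm_capacity_link_pair}. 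Dividing yields the claimed $\lambda_{SCF}(n)=\Theta(R_1/\log n)$ of (\ref{eq_th_UAV_mobility_1_1}).

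The main obstacle I expect is the concentration step: the relevant events are only weakly independent through the geometry of the up-and-down schedule, and one must apply the union bound uniformly over time and over all sensor UAVs outside the critical range while still preserving the $\Omega(\log n)$ lower bound. A secondary subtlety is that the up-and-down consumes extra flight time $\Theta(J/v)$, so one must check that the size constraint (\ref{eq_side_length}) still leaves each UAV enough residual battery to perform the up-and-down and then return to the control station, which implicitly caps the admissible constant $J$ but does not affect the scaling.
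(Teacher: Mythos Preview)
Your approach is correct and arrives at the same conclusion as the paper, but via a genuinely different route. The paper's proof does \emph{not} compute the instantaneous occupancy of A's neighborhood. Instead it counts the total number $N(J)$ of UAVs whose up-and-down trajectory intersects A's neighborhood (the same $\Theta(J\,n^{1/3}(\log n)^{2/3})$ you obtain for the cylinder population), and then simply re-invokes the $k_{th}\log k_{th}$ threshold from the order-statistics argument in Appendix~\ref{app_critical_3D}: since $N(J)$ and $k_{th}\log k_{th}$ are of the same order when $J=\Theta(1)$, the gap between consecutive passing UAVs is w.h.p.\ at most the crossing time, hence $\lambda'_{SCF,\mathrm{cube}}=R_1$ and the $\Theta(\log n)$ division follows. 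Your Little's-law-plus-Chernoff argument is more self-contained and actually yields a stronger intermediate statement (occupancy $\Theta(\log n)$ rather than merely $\ge 1$); the paper's reuse of the Theorem~\ref{lemm_critical_range} machinery is more economical in context but less transparent.

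Two minor remarks. First, your displayed occupancy computation has a bookkeeping slip: the factor $(1/t_0)\cdot t_0$ cancels, so the left-hand side carries no $t_0$ while the right-hand side does; the correct Little's-law expression is $\Theta\!\big((n s_n^2 J/s^3)\cdot (s_n/v)\cdot(1/t_0)\big)=\Theta\!\big(J\log n/(v t_0)\big)$, which of course is still $\Theta(\log n)$. Second, your anticipated ``weak independence'' obstacle is not real here: the dispatch times are independent uniform on $[0,t_0]$ and the spatial positions are independent, so the indicator that UAV $i$ is inside A's neighborhood at slot $t$ is a genuine Bernoulli independent across $i$, and Chernoff plus a union bound over the polynomially many (UAV, slot) pairs goes through without further work.
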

\begin{proof}
The proof is provided in Appendix \ref{app_mobility_control_1}.
\end{proof}

\begin{remark}
According to Theorem \ref{th_mobility_control_1},
when $J$ is a constant,
all UAVs
enjoy the same per-node capacity scaling laws
of $\Theta ( {\frac{R_1}{{\log n}}} )$ and the critical range is eliminated.
\end{remark}

Although the
up-and-down trajectories improve the capacity of SCF mode,
they reduce the time of environment monitoring for UAVs.
Compared with the scheme without mobility control,
the reduced time for environment
monitoring is $\frac{{2J}}{v}$ per UAV.

\subsection{Delay Analysis}
\label{sec_delay}

The delay of SCF mode is studied in this section.
The following theorem reveals the
impact of the size of the entire region,
the velocity of UAVs, the
number of UAVs and the flight duration
of UAVs on the delay of UAV networks.

\begin{Theo}\label{th_delay}
The delay of UAV networks with SCF mode is
\begin{equation}\label{eq_delay_SCF_3D}
{D_{SCF}}(n) \le \frac{3}{4}\frac{L}{v} + \Theta \left( {{t_0}{{\left( {\frac{{\log n}}{n}} \right)}^{1/3}}} \right).
\end{equation}
\end{Theo}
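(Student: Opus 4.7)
The plan is to split the source-to-sink SCF latency as $D_{SCF}(n) = D_{\mathrm{wait}} + D_{\mathrm{carry}}$, where $D_{\mathrm{wait}}$ is the time from when UAV~A generates a packet until a returning UAV first enters its neighborhood, and $D_{\mathrm{carry}}$ is the subsequent flight time of the carrying UAV to the control station. I would then argue that $E[D_{\mathrm{carry}}] \le 3L/(4v)$ accounts for the first summand of (\ref{eq_delay_SCF_3D}), while $E[D_{\mathrm{wait}}] = \Theta(t_0(\log n/n)^{1/3})$ accounts for the residual.

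For $D_{\mathrm{carry}}$, I would replace the cube by the circumscribed hemisphere of radius $L$ introduced in Fig.~\ref{fig_banqiu}; this overestimates the flight distance but captures every admissible pickup point. Because the pickup happens (up to an $s_n$-scale correction) at UAV~A's own location, the distance from pickup to the control station has the radial density $3r^2/L^3$ on $[0,L]$, whose mean is $\int_0^L r\cdot\frac{3r^2}{L^3}\,dr = 3L/4$. Dividing by the constant flight speed $v$ yields $E[D_{\mathrm{carry}}]\le 3L/(4v)$ immediately.

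For $D_{\mathrm{wait}}$, I would count returning UAVs whose straight-line paths to the control station cross A's neighborhood. These UAVs start inside an elongated tube of cross-section $\Theta(s_n^2)$ extending from A's neighborhood to the outer boundary; a concentration argument analogous to Lemma~\ref{lemma_number_of_node} shows the tube contains $\Theta(n s_n^2/L^2)$ UAVs w.h.p. Since each such UAV is dispatched at an independent uniform instant in $[0,t_0]$, its pass-through time is also uniformly spread across a window of length $\Theta(t_0)$. A standard uniform-spacing (order-statistics) argument then gives an expected consecutive-pass gap of the stated order after substituting $s_n = \Theta((\log n/n)^{1/3})$, which upper bounds $E[D_{\mathrm{wait}}]$ and finishes the proof.

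The main obstacle is the waiting-time estimate. One has to (i) define the tube so that exactly the relevant returning trajectories are captured, (ii) transfer the small-cube concentration of Lemma~\ref{lemma_number_of_node} to a thin elongated region where a direct union bound may be wasteful, and (iii) make the bound uniform across all positions of UAV~A---including those close to the boundary of the cube (and therefore outside the critical range of Theorem~\ref{lemm_critical_range}), where the tube shrinks and a naive counting would blow up. The carrying bound, by contrast, reduces to the one-line spherical integration above once the hemisphere replacement is justified.
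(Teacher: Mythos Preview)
Your decomposition $D_{SCF}=D_{\mathrm{wait}}+D_{\mathrm{carry}}$ and the computation $E[D_{\mathrm{carry}}]=\tfrac{3L}{4v}$ via the radial density $3r^{2}/L^{3}$ are exactly what the paper does. The waiting-time step, however, has a genuine gap.

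The set of starting positions whose return ray meets $A$'s neighborhood is not a tube of fixed cross-section $\Theta(s_n^{2})$; it is a truncated spherical sector with half-angle $\approx r_n/x$, whose volume is $\Theta\bigl((r_n/x)^{2}(L^{3}-x^{3})\bigr)$, so the expected count $k_u$ depends on $x$ (this is precisely the $k_u$ of (\ref{eq_ku})). For a typical $x=\Theta(L)$ bounded away from $L$ your count $\Theta(n s_n^{2}/L^{2})$ is of the right order, and the corresponding gap $t_0/k_u$ is then $\Theta\bigl(t_0(\log n/n)^{1/3}/\log n\bigr)$ --- note this is \emph{smaller} than the term in (\ref{eq_delay_SCF_3D}), not equal to it as you assert. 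The missing $\log n$ comes entirely from positions with $x$ close to $L$, where $k_u\to 0$ and the per-position waiting time is $\Theta(t_0)$. You flag this in obstacle (iii) but do not resolve it; a count that is uniform in $x$ and still gives the right final order does not exist here.

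The paper's remedy is not to chase a uniform count but to average over the source position: it computes
\[
\int_{0}^{L}\frac{3x^{2}}{L^{3}}\,\frac{t_0}{k_u(x)+1}\,dx,
\]
bounds the integrand by $\dfrac{36t_0}{\pi n r_n^{2}L^{2}}\cdot\dfrac{x^{4}}{L-x+r_n}$ using $L^{3}-(x-r_n)^{3}\ge (L-x+r_n)L^{2}$, and then extracts a factor $\log\!\bigl((L+r_n)/r_n\bigr)=\Theta(\log n)$ from the logarithmic singularity of the integral at $x=L$. That extra $\log n$, multiplied by the $t_0 L^{2}/(n r_n^{2})$ you would get for a generic $x$, is exactly what produces the announced $\Theta\bigl(t_0(\log n/n)^{1/3}\bigr)$. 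So the boundary region is not a technical nuisance to be uniformized away; it is where the stated order actually comes from.
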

\begin{proof}
The proof is provided in Appendix \ref{app_delay_3D}.
\end{proof}

\begin{remark}
The delay of SCF mode increases with the increase of $t_0$,
which is due to the fact that when $t_0$ is increased,
the UAVs return infrequently such that
the waiting time for a returning UAV is long.
Moreover, with the increase of $n$,
the waiting time for a returning UAV decreases,
which will reduce the delay of SCF mode.
According to (\ref{eq_delay_SCF_3D}),
the delay of SCF mode is an increasing function of $L$
and the delay of SCF mode is a decreasing function of $v$.
The lower bound of ${D_{SCF}}(n)$ is $\frac{3}{4}\frac{L}{v}$
since the term $\frac{3}{4}\frac{L}{v}$ is
the time that the returning UAV
carries data to the control station.
\end{remark}

\begin{figure*}[!t]
\centering
\includegraphics[width=0.7\textwidth]{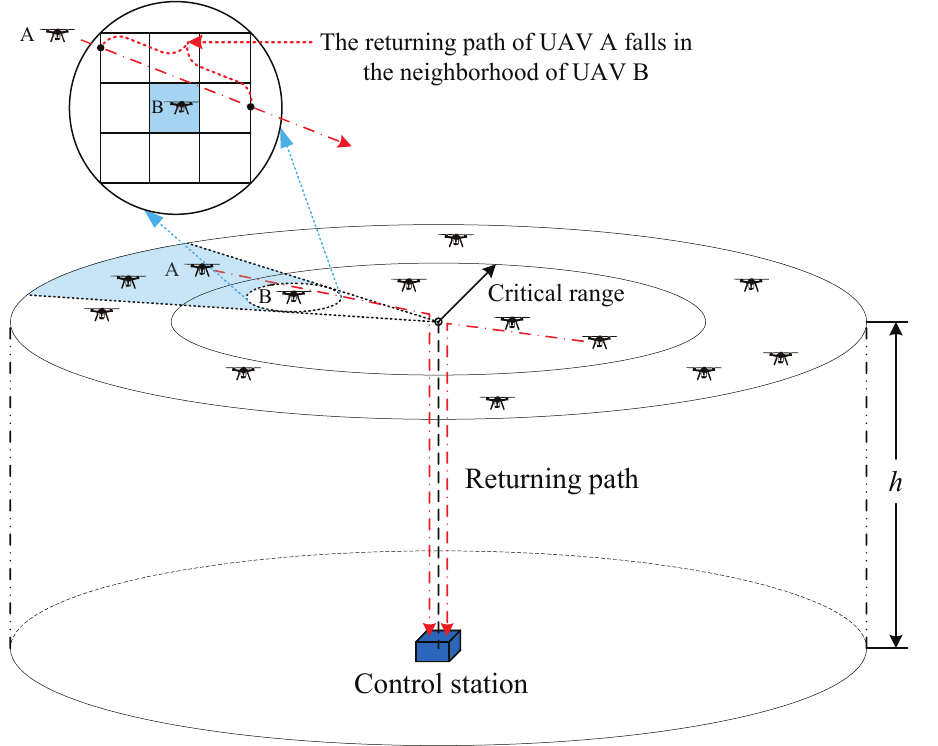}
\caption{UAV networks in 2D space.}
\label{fig_yuan}
\end{figure*}

\section{Scaling Laws: 2D UAV Networks}

\subsection{The Capacity of UAV Networks with Multi-Hop Mode}

In the multi-hop transmission,
all the data will be sent to the central cell.
Then the central cell sends data to the control station.
The per-hop
capacity of the 2D UAV networks
is denoted as $R_2$.
And the capacity from the central cell
to the control station is denoted as $R_3$.
The capacity of UAV networks is constraint
by $\min \left\{ {{R_2},{R_3}} \right\}$.
According to Lemma 6 in \cite{CRN_Capacity_Yin},
$R_2$ is constant on the condition that the path
loss exponent $\alpha > 2$, which is easy to be satisfied.
Besides, $R_3$ is also constant because the central cell
transmits data to the control
station over a separated channel.
The capacity of UAV networks with multi-hop transmission is
\begin{equation}
{\Lambda _{MH}} = \min \left\{ {{R_2},{R_3}} \right\},
\end{equation}
which is a constant.

\subsection{The Capacity of UAV Networks with SCF Mode}

The inscribed sphere with radius $K = s$ in Fig. \ref{fig_yuan}
is adopted to estimate
the lower bound of the
number of returning UAVs
to derive the lower
bound of network capacity with SCF mode.
In Fig. \ref{fig_yuan},
when the UAVs
in the shaded region return,
the data of UAV B can be forwarded by them
to the control station.
Fig. \ref{fig_yuan} illustrates the
circumscribed circle of the
neighbourhood of a UAV.
Define $x$ as the distance between the
circumscribed circle of UAV B's neighborhood
and the center of the plane.
Define $k$ as the number of UAVs in the shaded region.
The capacity of SCF mode is revealed in
the following theorem.

\begin{Theo}\label{lemm_critical_range2}
For 2D UAV networks, there exist $x_1^*$ and $x_2^*$ as follows.
\begin{equation}\label{eq_critical_range_accurate_2D}
x_1^* = \frac{1}{2}\sqrt {4{K^2} + {u^2}}  - \frac{1}{2}u = \Theta \left( 1 \right),
\end{equation}
\begin{equation}\label{eq_x_2_star_2D}
x_2^* = K - \Theta \left( {\frac{1}{{{{(\log n)}^2}}}} \right),
\end{equation}
where $u = \frac{{8{K^2}{t'_0}v}}{{3\sqrt 2 {c_3}{c_6}\log n}}\left( {\log \frac{{{t'_0}v}}{{c_3^{1/2}{c_6}}} + \frac{1}{2}\log n - \frac{1}{2}\log \log n} \right)$. $x_1^*$ is defined as critical range.
When $x \le x_1^*$, the order of the
per-node capacity of 2D UAV networks with SCF mode is
\begin{equation}
{\lambda _{SCF}}(n) = \Theta \left( {\frac{R_2}{{\log n}}} \right)
\end{equation}
w.h.p.
When  $x \ge x_2^*$, the order of the expectation of
${\lambda _{SCF}}(n)$ is
\begin{equation}
E\left[ {{\lambda _{SCF}}(n)} \right] =\Theta \left( {\frac{{\left( {1 - {{\left( {\frac{x}{K}} \right)}^2}} \right)}}{x}\frac{1}{{{t_0}}}{R_2}} \right).
\end{equation}
\end{Theo}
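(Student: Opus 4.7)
The plan is to mirror the proof of Theorem \ref{lemm_critical_range} for the 2D geometry of Fig. \ref{fig_yuan}. Returning UAVs follow straight radial paths to the center of the aerial plane, and the shaded region is the annular angular sector of UAVs whose return paths intersect the circumscribed circle of UAV B's neighborhood. For UAV B at distance $x$ from the center, this sector has angular width $\Theta(\xi_n/x)$ and radial extent $[x,K]$, giving area $\Theta(\xi_n(K^2-x^2)/x)$.

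First I would apply the 2D analog of Lemma \ref{lemma_number_of_node} together with the uniform-distribution assumption to obtain, w.h.p., $k=\Theta(n\xi_n(K^2-x^2)/(s^2 x))$ UAVs in the shaded region. Each such UAV is dispatched uniformly in $[0,{t'_0}]$ and dwells in the neighborhood of UAV B for time $\Theta(\xi_n/v)$ during its return, so the expected number of returning UAVs simultaneously present in the neighborhood at an arbitrary instant is $\Theta(k\xi_n/(v{t'_0}))$. Substituting $\xi_n=(c_3\log n/n)^{1/2}$ with $c_3=4s^2c_2$ rewrites this quantity as $\Theta(\log n\cdot(K^2-x^2)/(v{t'_0}x))$ up to constants.

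For the critical range, I would require that w.h.p. at every instant at least one returning UAV is present in the neighborhood to serve each of the $\Theta(\log n)$ UAVs in the central small cell; by a Chernoff and union-bound argument this holds once the expected simultaneous occupancy exceeds an appropriate multiple of $\log n$, with the extra factor $\log({t'_0}v/\xi_n)\sim \frac{1}{2}\log n-\frac{1}{2}\log\log n+O(1)$ arising from the tail behavior of overlapping dwell times across all UAVs in the shaded region (this is the source of the bracketed term in the definition of $u$). Setting the threshold equal to the expected occupancy reduces to the quadratic $x(x+u)=K^2$, with $u$ as defined in the theorem; its positive root is $x_1^*=\frac{1}{2}(\sqrt{4K^2+u^2}-u)=\Theta(1)$. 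When $x\le x_1^*$, the 9-TDMA protocol of Section \ref{sec_net_protol_2D} lets at most one returning UAV serve the cell in a given slot, so the $\Theta(\log n)$ UAVs in the cell share the rate $R_2$, yielding ${\lambda_{SCF}}(n)=\Theta(R_2/\log n)$ w.h.p.

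For $x\ge x_2^*$ I would show that the dwell intervals of distinct returning UAVs are disjoint w.h.p., so $E[{\lambda_{SCF}}(n)]$ equals $R_2$ times the expected fraction of time the neighborhood is occupied by a returning UAV, divided by the $\Theta(\log n)$ UAVs sharing the slot. Substituting the shaded-region area expression and $t'_0=\Theta(t_0)$ gives the stated $\Theta((1-(x/K)^2)R_2/(x t_0))$. The boundary $x_2^*=K-\Theta(1/(\log n)^2)$ then emerges by setting the expected simultaneous occupancy to $\Theta(1)$ and expanding $K^2-x^2\approx 2K(K-x)$ near $x=K$. I expect the main obstacle to be the careful bookkeeping of logarithmic and constant factors inside $u$ so that the critical equation resolves into the stated closed form, together with upgrading the expected-occupancy computation to a w.h.p.\ statement via concentration for the dependent indicators of UAV presence in the neighborhood.
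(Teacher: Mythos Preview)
Your overall plan matches the paper's: mirror the 3D argument in the 2D geometry, compute the shaded-region area as $\Theta(\xi_n(K^2-x^2)/x)$, obtain $k=\Theta(n\xi_n(K^2-x^2)/(K^2x))$, and reduce the critical-range condition to the quadratic $x^2+ux=K^2$. The paper, however, does not use a Chernoff/union-bound occupancy argument. It works directly with the spacing distribution of uniform order statistics (Lemma~\ref{lemm_range}): with $k$ UAVs dispatched uniformly in $[0,t'_0]$, the gap $w$ between consecutive returns has density $g(w)=\frac{k}{t'_0}(1-w/t'_0)^{k-1}$, and with $k_{th}=t'_0 v/(c_3^{1/2}c_6)\,(n/\log n)^{1/2}$ one gets $\Pr\{w\le\text{dwell time}\}=1-(1-1/k_{th})^k$. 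This tends to $1$ when $k\ge k_{th}\log k_{th}$ and to $0$ when $k\le k_{th}/\log k_{th}$; the factor $\log k_{th}=\log\frac{t'_0 v}{c_3^{1/2}c_6}+\tfrac12\log n-\tfrac12\log\log n$ is precisely the bracketed term in $u$. Your Chernoff route reaches the same $k_{th}\log k_{th}$ threshold, so for $x_1^*$ the difference is stylistic rather than substantive.

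There is a concrete slip in your derivation of $x_2^*$. Setting the expected simultaneous occupancy to $\Theta(1)$ yields $(K^2-x^2)/x=\Theta(1/\log n)$ and hence $K-x=\Theta(1/\log n)$, not $\Theta(1/(\log n)^2)$. The paper's condition for the non-overlapping regime is $k\le k_{th}/\log k_{th}$, which in occupancy terms is $k/k_{th}=\Theta(1/\log n)$; this extra $1/\log n$ is exactly what produces $\gamma=\Theta(1/(\log n)^2)$ and thus $x_2^*=K-\Theta(1/(\log n)^2)$. To recover the stated $x_2^*$ you need to tighten your disjointness criterion from ``expected occupancy $\Theta(1)$'' to ``expected occupancy $o(1/\log k_{th})$''.
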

\begin{proof}
The proof is provided in Appendix \ref{app_critical_2D}.
\end{proof}

\begin{remark}
The critical range for 2D UAV networks is a constant, which
can not be eliminated via increasing the value of $n$.
For the UAVs outside the critical range, ${\lambda _{SCF}}(n)$
still decreases with the increase of $x$.
\end{remark}

For 2D UAV networks,
similar mobility control scheme can be designed
to eliminate the critical range and improve the capacity of
UAV networks.

\subsection{Flight Trajectories Control for Capacity Improvement}

For 2D UAV networks,
similar trajectories
are designed for capacity improvement.
As illustrated in Fig. \ref{fig_UAV_Mobility_2D},
when a UAV is about to return, firstly,
the UAV
flies leftward with distance $J/2$.
Secondly, the UAV flies rightward with distance
$J$.
Thirdly, the UAV flies leftward with distance $J/2$
and reaches the original spot.
Finally, the UAV returns to the control station.

The flight trajectories denoted by the dotted
lines in Fig. \ref{fig_UAV_Mobility_2D}
are designed to collect the data of the UAVs
along the flight trajectories, which are called ``left-and-right
trajectories''.
With the left-and-right trajectories,
Theorem \ref{th_mobility_control_1_2D} proves that the capacity of
SCF mode for the UAVs outside the critical
range can be improved if $J$
is a constant.

\begin{figure}[!t]
\centering
\includegraphics[width=0.42\textwidth]{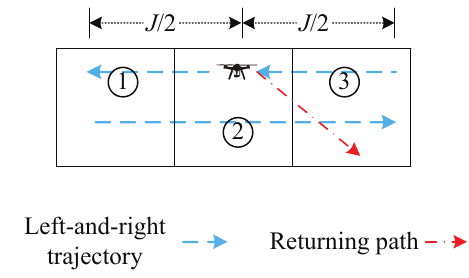}
\caption{The flight trajectory of a UAV with mobility control in 2D UAV networks.}
\label{fig_UAV_Mobility_2D}
\end{figure}

\begin{Theo}\label{th_mobility_control_1_2D}
The UAVs outside the critical range
fly following the left-and-right trajectories before they return.
If $J$ is a constant,
the order of the per-node capacity
of SCF mode for the UAVs that
are outside the critical range is
\begin{equation}\label{eq_th_UAV_mobility_1_1_2D}
\lambda_{SCF}(n) = \Theta \left( {\frac{{{R_2}}}{{\log n}}} \right).
\end{equation}
\end{Theo}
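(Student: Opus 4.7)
The strategy is to mirror the proof of Theorem \ref{th_mobility_control_1}, transferring the up-and-down argument to the 2D left-and-right geometry. The target is to show that with a constant-length detour, any UAV B outside the critical range has its neighborhood swept by enough returning UAVs that, as in the $x \le x_1^*$ regime of Theorem \ref{lemm_critical_range2}, the SCF link into B is essentially always available and the per-node capacity is bottlenecked only by the $\Theta(\log n)$ TDMA sharing inside B's cell, yielding $\Theta(R_2/\log n)$.

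First I would set up the 2D geometry. A returning UAV A at $(x_A, y_A)$ executes the detour that sweeps the horizontal segment $[x_A-J/2,\,x_A+J/2]\times\{y_A\}$, crossing each interior point twice due to the out-and-back motion. Since the neighborhood of B is a $3\xi_n\times 3\xi_n$ cluster centered at $(x_B,y_B)$, A's detour intersects this neighborhood precisely when $|y_A-y_B|\le 3\xi_n/2$ and $|x_A-x_B|\le J/2+3\xi_n/2$. The area of this ``good'' region is $\Theta(J\xi_n)=\Theta(\xi_n)$ since $J$ is constant, and because the UAVs are uniformly distributed over a plane of area $\Theta(1)$, a standard Chernoff bound shows that the number of returning UAVs whose detours cross B's neighborhood is $\Theta(n\xi_n)=\Theta(\sqrt{n\log n})$ w.h.p.

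Next I would aggregate the coverage time. Each qualifying returning UAV spends time $\Theta(\xi_n/v)$ inside B's neighborhood, because its detour sweeps a length of at most $3\xi_n$ through the neighborhood and traverses it twice at speed $v$. Summing over all such UAVs during B's lifetime gives total coverage time $\Theta(\sqrt{n\log n})\cdot\Theta(\xi_n)=\Theta(\log n)$, which is $\Theta(\log n)$ times the constant lifetime $t'_0$. A concentration argument analogous to the one used inside the critical range in Theorem \ref{lemm_critical_range2} then shows that w.h.p. the fraction of time B has at least one returning UAV in its neighborhood tends to $1$. Given continuous SCF access, channel $W_3^*$ offers rate $R_2$ shared among the $\Theta(\log n)$ UAVs in B's cell under 9-TDMA, producing the claimed $\lambda_{SCF}(n)=\Theta(R_2/\log n)$.

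The main obstacle, exactly as in the 3D case, is upgrading the expected coverage time to a uniform-in-time guarantee: one must rule out long uncovered intervals within $[0,t'_0]$ despite the dependencies introduced by the shared dispatch clock. I would carry this out via the same union and Chernoff-bound machinery used in Appendix \ref{app_mobility_control_1}, replacing the 3D circumscribed sphere of radius $\Theta(s_n)$ by the 2D circumscribed disc of radius $\Theta(\xi_n)$ and the cubic density $c_1\log n/n$ by the planar density $c_3\log n/n$. The matching upper bound $O(R_2/\log n)$ is immediate from the per-cell TDMA constraint, completing the $\Theta(R_2/\log n)$ statement.
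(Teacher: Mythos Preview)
Your proposal is correct and follows essentially the same route as the paper: count the $\Theta(\sqrt{n\log n})$ UAVs whose left-and-right detour lies in a strip of dimensions $\Theta(J)\times\Theta(\xi_n)$ through B's neighborhood, verify this count reaches the threshold order $k_{th}\log k_{th}$ so that SCF access is continuous \emph{w.h.p.}, and divide the resulting cell rate $R_2$ by the $\Theta(\log n)$ occupants. One small correction: the machinery in Appendix~\ref{app_mobility_control_1} that you invoke is not a union/Chernoff argument but the order-statistics gap bound of Appendix~\ref{app_critical_3D} (showing $\Pr\{w\le c\,\xi_n/v\}\to 1$ once $k\ge k_{th}\log k_{th}$), so your coverage-time aggregation is a mild repackaging of exactly that step rather than a new concentration argument.
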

\begin{proof}
The proof is provided in Appendix \ref{app_mobility_control_1_2D}.
\end{proof}

\begin{remark}
With the control of flight trajectories,
all UAVs can achieve the same per-node
capacity of $\Theta \left( {\frac{{{R_2}}}{{\log n}}} \right)$,
which means that the critical range is eliminated.
\end{remark}

\subsection{Delay Analysis}
\label{sec_delay2}

The delay of UAV networks with SCF mode is
revealed in the following theorem.

\begin{Theo}\label{th_delay_2D}
The order of the delay of UAV networks with SCF mode is
\begin{equation}\label{eq_delay_SCF_2D_per}
{D_{SCF}}(n) \le \frac{{2K}}{{3v}} + \frac{h}{v} + \Theta \left( {{t_0}{{\left( {\frac{{\log n}}{n}} \right)}^{1/2}}} \right).
\end{equation}
\end{Theo}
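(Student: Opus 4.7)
My plan is to mirror, step for step, the argument that proves the 3D analogue Theorem \ref{th_delay} and adapt it to the 2D geometry described in Section \ref{sec_net_protol_2D}. The end-to-end SCF delay for a typical source UAV will be split as $D_{SCF}(n) = D_{wait} + D_{carry}$, where $D_{wait}$ is the time until some returning UAV first traverses the source's neighborhood and accepts the packet, and $D_{carry}$ is the time that the returning UAV then needs to transport the packet along its prescribed trajectory to the control station. Taking expectations of both terms and adding them will give the bound in (\ref{eq_delay_SCF_2D_per}).

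For $D_{carry}$, I use the returning path dictated by the network model: the returning UAV first flies radially to the center of the aerial 2D plane, and then descends the fixed vertical distance $h$ to the control station. Writing the pickup location as being at distance $X$ from the center of the plane, one has $D_{carry} = (X+h)/v$. Since the source UAVs are uniformly distributed in the square plane and the SCF analysis in Theorem \ref{lemm_critical_range2} is performed using the inscribed disk of radius $K$, the expected pickup distance is the mean of $X$ for a uniform point in a disk, i.e. $\int_0^K r\cdot(2\pi r)/(\pi K^2)\,dr = 2K/3$. This yields the deterministic contribution $\mathbb{E}[D_{carry}] = 2K/(3v) + h/v$, matching the first two terms of (\ref{eq_delay_SCF_2D_per}).

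For $D_{wait}$ the key is the geometry already used in the proof of Theorem \ref{lemm_critical_range2}. The returning UAVs whose radial trajectories pass through the neighborhood of a source at distance $x$ from the center originate from a narrow radial strip extending from the source out to the boundary of the inscribed disk. Because one such strip decomposes into $\Theta((K-x)/\xi_n)=\Theta(\sqrt{n/\log n})$ small cells of side $\xi_n=\sqrt{c_3\log n/n}$, and because the $\Theta(\log n)$ returning UAVs inside a single cell cross the source's neighborhood essentially simultaneously and thus count as only one usable pickup event (by the SCF rule ``only one returning UAV receives the data''), the number of distinct pickup opportunities per return cycle is $\Theta(\sqrt{n/\log n})$. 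Since UAVs are dispatched at times uniform on $[0,t_0']$ with $t_0'=t_0-2h/v=\Theta(t_0)$, these opportunities are near-uniformly spread across one cycle, so the expected waiting time until the next opportunity is $\Theta(t_0)/\Theta(\sqrt{n/\log n}) = \Theta(t_0 \sqrt{\log n/n})$, which is exactly the last term in (\ref{eq_delay_SCF_2D_per}).

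The step I expect to be the main obstacle is the rigorous control of $\mathbb{E}[D_{wait}]$. Two technical points must be handled carefully: first, one must justify grouping the $\Theta(\log n)$ returning UAVs of a single cell into one pickup event (the correctness of this grouping is exactly what determines whether the $\log n$ factor ends up inside or outside the square root in the final bound), and second, one must argue that the induced point process of pickup events on $[0,t_0']$ is close enough to uniform to let us bound the inter-event gap by its mean. Both points should follow from a concentration argument on the cell occupancy in the spirit of Lemma \ref{lemma_number_of_node} applied along the radial strip, together with the fact that dispatch times are i.i.d.\ uniform on $[0,t_0']$. Once these two issues are settled, summing $\mathbb{E}[D_{carry}]$ and $\mathbb{E}[D_{wait}]$ yields (\ref{eq_delay_SCF_2D_per}) and completes the proof.
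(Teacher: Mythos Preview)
Your treatment of $D_{carry}$ is correct and identical to the paper's: the uniform-in-disk density $f(x)=2x/K^2$ gives $\int_0^K \frac{2x}{K^2}\cdot\frac{x}{v}\,dx=\frac{2K}{3v}$, and adding $h/v$ matches the first two terms of (\ref{eq_delay_SCF_2D_per}).

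The waiting-time argument, however, contains a genuine error. Your grouping step claims that the $\Theta(\log n)$ UAVs residing in a single small cell ``cross the source's neighborhood essentially simultaneously.'' They do not. Spatial co-location says nothing about return times: by the dispatching model each UAV's return instant is i.i.d.\ uniform on $[0,t_0']$, so the $\Theta(\log n)$ UAVs of a given cell produce $\Theta(\log n)$ pickup opportunities spread over the whole cycle, not one clustered event. No concentration argument on cell occupancy can repair this, because the issue is temporal, not spatial. With the grouping removed, your pointwise count of pickup events becomes the full $k_u=\Theta(\sqrt{n\log n})$ of (\ref{eq_ku2}), and the pointwise waiting time is $\Theta(t_0/\sqrt{n\log n})$, which is \emph{smaller} than the bound you are aiming for --- so the stated order cannot be obtained as a typical-$x$ estimate.

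What actually produces the $\Theta\!\big(t_0\sqrt{\log n/n}\big)$ term in the paper is the averaging over the source position, which you omit. The paper writes $E[D_{wait}]=\int_0^K f(x)\,\frac{t_0}{k_u(x)+1}\,dx$ with $k_u(x)$ from (\ref{eq_ku2}), bounds $K^2-(x-r_n)^2\ge K(K-x+r_n)$, and evaluates the resulting integral $\frac{8t_0}{nKr_n}\int_0^K\frac{x^2}{K-x+r_n}\,dx$. The integrand has a near-singularity at $x=K$ regularized by $r_n$, yielding a dominant term $(K+r_n)^2\log\frac{K+r_n}{r_n}=\Theta(\log n)$; combined with the prefactor $1/(n r_n)=\Theta\!\big(\frac{1}{\sqrt{n\log n}}\big)$ this gives $\Theta\!\big(t_0\sqrt{\log n/n}\big)$. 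In other words, the $\log n$ in the final bound is not a cell-occupancy $\log n$; it is the logarithmic blow-up of the waiting time for sources near the disk boundary where almost no returning UAVs pass. Your proposal should replace the cell-grouping heuristic by this integration over $x$.
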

\begin{proof}
The proof is provided in Appendix \ref{app_delay_2D}.
\end{proof}

Overall, the scaling laws for
3D and 2D UAV networks are analyzed.
When the returning UAVs are exploited to
implement the SCF scheme,
the performance of
UAV networks can be improved.

\section{Discussions}

\subsection{Per-node Capacity of UAV Networks with SCF Mode}

The number of returning UAVs has
an impact on the capacity of UAV networks with SCF mode.
In Fig. \ref{fig_cube}, the
control station is located in the origin
and UAVs are distributed in
3D space.
The number of potential
returning UAVs at each location
is denoted by color.
The radius of the neighborhood of each UAV is $r = 0.8$.
Note that
the locations near the control station have a
large number of returning UAVs.
In Fig. \ref{fig_square},
UAVs are distributed in the horizontal 2D plane.
All UAVs fly to the origin and then fly to the
control station.
The radius of the neighborhood of each UAV is $r = 0.5$.
Similar to 3D UAV networks,
with the decrease of the
distance between a UAV and the origin,
the number of returning UAVs
flying through the neighborhood of
the UAV is increasing.

\begin{figure}
\subfigure[{The number of returning UAVs in 3D space.}]{
\label{fig_cube}
\includegraphics[width=0.48\textwidth]{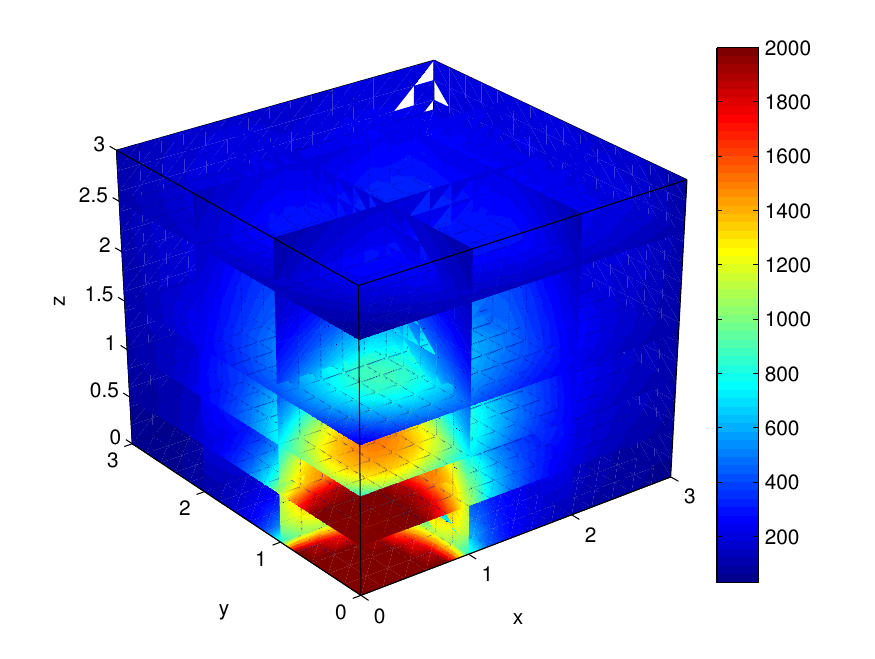}}
\subfigure[{The number of returning UAVs in 2D space.}]{
\label{fig_square}
\includegraphics[width=0.48\textwidth]{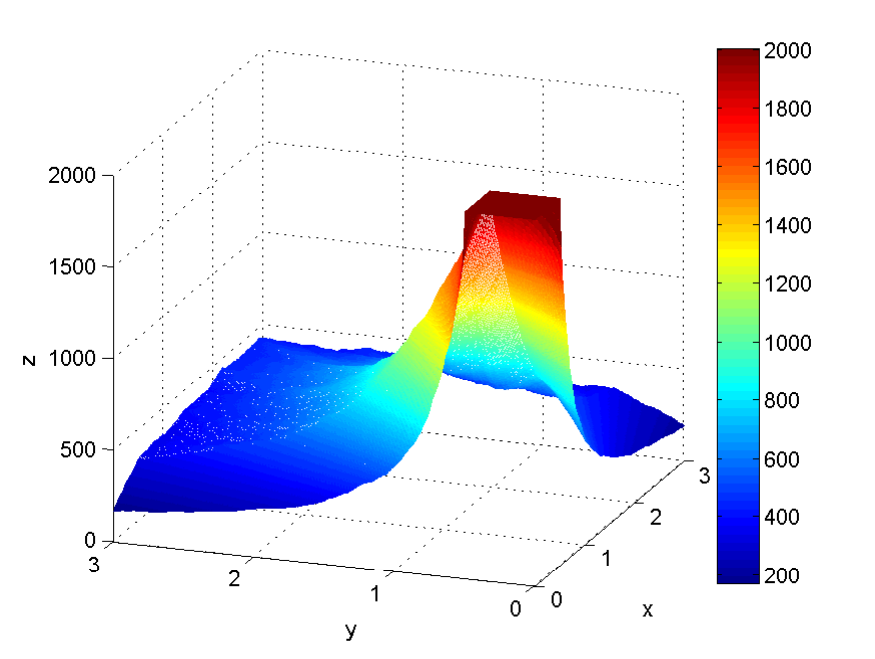}}
\caption{The number of returning UAVs.}
\end{figure}

When the distance between a UAV and the control station $x$
is smaller than the critical range ${x_1^*}$,
$\lambda_{SCF}(n)$ is maximum.
When $x \ge {x_2^*}$,
the expectation of $\lambda_{SCF}(n)$
decreases with the increase of $x$.
When $x \ge x_2^*$, the expectation of ${\lambda _{SCF}}(n)$
in 3D UAV networks
is $\Theta \left( {\frac{{1 - {{\left( {\frac{x}{L}} \right)}^3}}}{{{x^2}}}\frac{1}{{{t_0}}}{R_1}} \right)$.
While the expectation of ${\lambda _{SCF}}(n)$ in 2D UAV networks is
$\Theta \left( {\frac{{\left( {1 - {{\left( {\frac{x}{K}} \right)}^2}} \right)}}{x}\frac{1}{{{t_0}}}{R_2}} \right)$.
The function $g(x)$ is defined as follows.
\begin{equation}
g(x) = \frac{{\left( {1 - {{\left( {\frac{x}{K}} \right)}^2}} \right)}}{x} - \frac{{1 - {{\left( {\frac{x}{L}} \right)}^3}}}{{{x^2}}}.
\end{equation}

In this paper, we assume $K = L$.
It can be proved that $g(x) \ge 0$\footnote{This conclusion can be proved
by finding the derivative of $g(x)$.}.
Hence with the increase of $x$,
the capacity of UAV networks with SCF mode for 2D
UAV networks decreases slower
than that for 3D UAV networks.

\subsection{Impact of Flight Duration}

The $\lambda_{SCF}(n)$ is a decreasing function of
the flight duration $t_0$,
which is due to the fact that when $t_0$ increases, the number
of returning UAVs decreases and $\lambda_{SCF}(n)$
decreases correspondingly.
Besides, the critical range $x_1^*$
decreases with the increase of $t_0$,
which will
decrease the network capacity of UAV networks with
SCF mode.
Note that
the delay is an increasing function of $t_0$.
According to Theorem \ref{th_delay} and Theorem \ref{th_delay_2D},
when $t_0$ increases,
the waiting time for a returning UAV increases correspondingly,
which will increase the delay.

\subsection{Mobility Control}

The mobility of UAVs can be
controlled to eliminate the critical range and
improve the
network capacity with SCF mode. Note
that when $J =  \Theta \left( 1 \right)$,
all UAVs can achieve the maximum capacity,
namely, $\Theta \left( {\frac{{{R_1}}}{{\log n}}} \right)$
for 3D UAV networks and $\Theta \left( {\frac{{{R_2}}}{{\log n}}} \right)$
for 2D UAV networks.
With mobility control,
the capacity of 3D UAV networks with
SCF mode is $\Theta (\frac{{n{R_1}}}{{\log n}})$,
which is $\Theta (\frac{n}{{\log n}})$ times
higher than that with multi-hop mode.
Similarly,
the capacity of 2D UAV networks with SCF mode
is also $\Theta (\frac{n}{{\log n}})$ times
larger than that with multi-hop mode.

However, in the mobility control scheme,
the UAVs switch to the returning state
earlier than the schemes without mobility control.
Hence the
cost of mobility control scheme is that the
time for environment monitoring is reduced.
In this paper, the reduced time for environment monitoring
is $\frac{{2J}}{v}$.

\subsection{Delay of UAV Networks with SCF Mode}

The delay of 3D UAV networks with SCF mode
is illustrated in Fig. \ref{fig_delay}.
Note that
the delay tends to a constant
when $n$ tends to infinity,
which means that the waiting
time for a returning UAV
tends to 0 when $n$ tends to infinity.
The waiting
time for 3D and 2D UAV networks is
upper bounded by $\Theta \left( {{t_0}{{\left( {\frac{{\log n}}{n}} \right)}^{1/3}}} \right)$
and $\Theta \left( {{t_0}{{\left( {\frac{{\log n}}{n}} \right)}^{1/2}}} \right)$
respectively.
Notice that the waiting time for 3D UAV networks is
larger than that for 2D UAV networks.
The increase of $t_0$
will enlarge the waiting time
and increase the delay of SCF mode.
The time that a UAV
flies to the control station
is the lower bound
of the delay of SCF mode.

\begin{figure}[!t]
\centering
\includegraphics[width=0.3\textwidth]{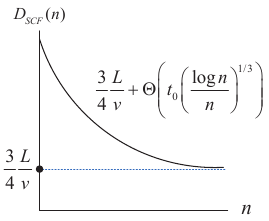}
\caption{The delay of 3D UAV networks with SCF mode.}
\label{fig_delay}
\end{figure}

\section{Conclusion}

In this paper,
the mobility and short flight duration of UAVs
are exploited to improve the performance of UAV networks.
With the observation that UAVs frequently return to
the control station to get energy replenishment,
we exploit the returning UAVs to
implement SCF transmission scheme.
Compared with the performance of multi-hop mode,
The capacity of UAV networks with SCF mode improves
$\Theta (\frac{n}{{\log n}})$ times.
However, a critical range is discovered,
which is a watershed for the capacity of UAV networks.
The per-node capacity of
SCF mode for the UAVs outside the
critical range is smaller than that within the
critical range.
Hence we propose a
mobility control scheme to eliminate the critical range and
improve
the capacity of UAV networks.
This paper has proved the advantages of
SCF mode for
UAV networks,
which may motivate the design of more
SCF transmission schemes for
UAV networks.

\section*{Acknowledgments}
The authors appreciate editor and anonymous reviewers for
their precious time and great effort in improving this paper.

\begin{appendices}

\section{Proof of Theorem \ref{lemm_critical_range}}
\label{app_critical_3D}

As illustrated in
Fig. \ref{fig_banqiu},
a returning UAV
flies through the neighborhood of a
UAV with time
$\frac{{c_1^{1/3}{c_4}}}{v}{\left( {\frac{{\log n}}{n}} \right)^{1/3}}$,
where $c_4$ is a constant and
$0 < {c_4} \le 3\sqrt 3$.
During the time that a
returning UAV flies through
the neighborhood of
a cube,
the amount of data carried by
the returning UAV is
$\frac{{{c_1^{1/3}}{c_4}}}{v}{\left( {\frac{{\log n}}{n}} \right)^{1/3}}R_1$.

Since sequential dispatching is considered,
the $k$ returning UAVs will return uniformly.
We investigate the distribution of the time
interval between two adjacent returning UAVs.
According to order statistics \cite{Order_statistics},
for an ascending sequence of uniformly distributed random variables
${Y_1},{Y_2}, \cdots ,{Y_k}$, the PDF of $W_r = Y_{r+1} - Y_r$ is
provided in Lemma \ref{lemm_range}.
With some manipulations, the
PDF of the time interval between two adjacent returning UAVs
is
\begin{equation}\label{eq_PDF_intervcal}
g(w) = \frac{k}{{{t_0}}}{\left( {1 - \frac{w}{{{t_0}}}} \right)^{k - 1}},
\end{equation}
where $k$ is a random
variable following binomial distribution.
According to Lemma 2 in \cite{lemma2}, when $n$
tends to infinity, the
value of $k$ has the same order with its expectation.

When the time interval $w \le \frac{{c_1^{1/3}{c_4}}}{v}{\left( {\frac{{\log n}}{n}} \right)^{1/3}}$,
multiple UAVs will pass through
the neighborhood of UAV A and
the UAVs in a small cube can share the following capacity
with SCF mode.
\begin{equation}\label{eq_capacity_cube_R1}
{\lambda _{SCF,cube}}(n) = {R_1}.
\end{equation}

Defining
\begin{equation}
{k_{th}} = \frac{{{t_0}}}{{\frac{{c_1^{1/3}{c_4}}}{v}{{\left( {\frac{{\log n}}{n}} \right)}^{1/3}}}},
\end{equation}
we will verify that if $k \ge {k_{th}}\log {k_{th}}$,
(\ref{eq_capacity_cube_R1}) is established.

The probability that $w \le \frac{{c_1^{1/3}{c_4}}}{v}{\left( {\frac{{\log n}}{n}} \right)^{1/3}}$ is

\begin{equation}\label{eq_probab_smaller}
\begin{aligned}
& \Pr \left\{ {w \le \frac{{c_1^{1/3}{c_4}}}{v}{{\left( {\frac{{\log n}}{n}} \right)}^{1/3}}} \right\}\\
& = \int_0^{\frac{{c_1^{1/3}{c_4}}}{v}{{\left( {\frac{{\log n}}{n}} \right)}^{1/3}}} {g(w)dw} \\
& = 1 - {\left( {1 - \frac{1}{{{k_{th}}}}} \right)^k}.
\end{aligned}
\end{equation}

If $k \ge {k_{th}}\log {k_{th}}$, the limit of (\ref{eq_probab_smaller})
with $k_{th}$ tending to infinity is 1.
Hence if $k \ge {k_{th}}\log {k_{th}}$,
${\lambda _{SCF,cube}}(n) = {R_1}$ \emph{w.h.p.}
Similarly, if $k \le \frac{{{k_{th}}}}{{\log {k_{th}}}}$,
the limit of (\ref{eq_probab_smaller})
with $k_{th}$ tending to infinity is 0.
Hence if $k \le \frac{{{k_{th}}}}{{\log {k_{th}}}}$,
there is at most one UAV flying through the neighborhood of UAV A \emph{w.h.p.}
and the expectation of
${\lambda _{SCF,cube}}(n)$ in this case is

\begin{equation}
E\left[ {{\lambda _{SCF,cube}}(n)} \right] = \frac{{c_1^{1/3}{c_4}}}{v}{\left( {\frac{{\log n}}{n}} \right)^{1/3}}{R_1}\frac{k}{{{t_0}}}.
\end{equation}

%

In order to calculate the value of $k$,
we derive the volume of the spherical sector in Fig. \ref{fig_banqiu} as follows.
\begin{equation}
{S_c} = \frac{{2\pi }}{3}{L^3}\left( {1 - \frac{{\sqrt {{x^2} - r_n^2} }}{x}} \right),
\end{equation}
where ${r_n} = \frac{{3\sqrt 3 }}{2}{s_n}$
is the radius of the circumscribed
sphere of a UAV's neighborhood.
In Fig. \ref{fig_banqiu},
the difference in the volume
between the spherical sector with radius $L$ and
the spherical sector with radius $x_2$
is the upper bound of the
volume of shaded region.
Similarly,
the difference in the volume
between the spherical sector with radius $L$ and
the spherical sector with radius $x_1$
is the lower bound of the
volume of shaded region.
With $x_1 = x + r_n$
and $x_2 = x-r_n$, the difference between $x_1$ and $x_2$ can be omitted
in the asymptotic analysis
because $\mathop {\lim }\limits_{n \to \infty } {r_n} = 0$.
Hence the volume of the shaded region is
\begin{equation}\label{eq_vol_upper}
\begin{aligned}
& {S_u} = \frac{{2\pi }}{3}{L^3}\left( {1 - \frac{{\sqrt {{x^2} - r_n^2} }}{x}} \right)\left( {1 - {{\left( {\frac{{x - {r_n}}}{L}} \right)}^3}} \right)\\
& = \frac{{2\pi }}{3}{L^3}\underbrace {\left( {1 - \sqrt {1 - {{\left( {\frac{{{r_n}}}{x}} \right)}^2}} } \right)}_{(a)}\left( {1 - \frac{{{x^3}}}{{{L^3}}}} \right).
\end{aligned}
\end{equation}

Notice that $\frac{{{r_n}}}{x}$
tends to $0$. Using
Taylor series expansion for the term $(a)$ in (\ref{eq_vol_upper}),
we have
\begin{equation}
{S_u} = \frac{{2\pi }}{3}{L^3}\left( {\frac{1}{2}{{\left( {\frac{{{r_n}}}{x}} \right)}^2} + o\left( {{{(\frac{{{r_n}}}{x})}^3}} \right)} \right)\left( {1 - \frac{{{x^3}}}{{{L^3}}}} \right).
\end{equation}

The shaded region of Fig. \ref{fig_banqiu}
averagely contains $k_u$ UAVs.
\begin{equation}\label{eq_ku}
{k_u} = n\frac{{{S_u}}}{{4{L^3}}} = \frac{{\pi n}}{6}\left( {\frac{1}{2}{{(\frac{{{r_n}}}{x})}^2} + o({{(\frac{{{r_n}}}{x})}^3})} \right)(1 - \frac{{{x^3}}}{{{L^3}}}).
\end{equation}

Letting ${k_u} \ge {k_{th}}\log {k_{th}}$, we have
\begin{equation}\label{eq_kthlogkth}
\frac{1}{{{x^2}}}(1 - \frac{{{x^3}}}{{{L^3}}}) \ge u,
\end{equation}
where
\begin{equation}
\begin{aligned}
& u = \frac{{16{t_0}v}}{{9\pi {c_1}{c_4}\log n}}\left( {\log \frac{{{t_0}v}}{{c_1^{1/3}{c_4}}} + \frac{1}{3}\log n - \frac{1}{3}\log \log n} \right)\\
& = \Theta (1).
\end{aligned}
\end{equation}

(\ref{eq_kthlogkth}) is equivalent to
\begin{equation}\label{eq_critical_type1}
\begin{aligned}
& x \le \frac{1}{3}\frac{{{2^{1/3}}{L^6}{u^2}}}{{{{\left( {27{L^3} - 2{L^9}{u^3} + 3{L^3}\sqrt {3(27 - 4{L^6}{u^3})} } \right)}^{1/3}}}}\\
& + \frac{1}{3}\frac{1}{{{2^{1/3}}}}{\left( {27{L^3} - 2{L^9}{u^3} + 3{L^3}\sqrt {3(27 - 4{L^6}{u^3})} } \right)^{1/3}} \\
& - \frac{1}{3}{L^3}u \buildrel \Delta \over = x_1^*.
\end{aligned}
\end{equation}

The $x_1^*$ in (\ref{eq_critical_type1})
is defined as the critical range, which is a constant.
Letting $k \le \frac{{{k_{th}}}}{{\log {k_{th}}}}$, we have

\begin{equation}\label{eq_critical_type22}
\frac{1}{{{x^2}}}(1 - \frac{{{x^3}}}{{{L^3}}}) \le \gamma,
\end{equation}
where
\begin{equation}
\gamma  = \frac{{\frac{{16{t_0}v}}{{9\pi {c_1}{c_4}}}\frac{1}{{\log n}}}}{{\log \frac{{{t_0}v}}{{c_1^{1/3}{c_4}}} + \frac{1}{3}\log n - \frac{1}{3}\log \log n}} = \Theta \left( {\frac{1}{{{{(\log n)}^2}}}} \right).
\end{equation}

Solving the inequality (\ref{eq_critical_type22}),
the following relation can be derived.
\begin{equation}
x \ge L - \Theta \left( {\frac{1}{{{{(\log n)}^2}}}} \right) + o\left( {\frac{1}{{{{(\log n)}^3}}}} \right) \buildrel \Delta \over =  x_2^*.
\end{equation}

The capacity of SCF mode for
the UAVs in
a small cube is summarized as follows.
\begin{equation}
\begin{aligned}
& {{\lambda _{SCF,cube}}(n) = {R_1},}&{x \le x_1^*},\\
& {E\left[ {{\lambda _{SCF,cube}}(n)} \right] = \frac{{c_1^{1/3}{c_4}}}{v}{{\left( {\frac{{\log n}}{n}} \right)}^{1/3}}{R_1}\frac{k}{{{t_0}}},}&{x > x_2^*}.
\end{aligned}
\end{equation}

When $x > x_2^*$, we have
\begin{equation}\label{eq_capacity_within_critical}
E\left[ {{\lambda _{SCF,cube}}(n)} \right] = \Theta \left( {\frac{{1 - {{\left( {\frac{x}{L}} \right)}^3}}}{{{x^2}}}\frac{{\log n}}{{{t_0}}}{R_1}} \right) \le {R_1}.
\end{equation}

According to Lemma \ref{lemma_number_of_node},
$\lambda_{SCF,cube} (n)$ is shared by $\Theta \left( {\log n} \right)$ UAVs.
Thus when $x \le x_1^*$,
the per-node capacity of UAV networks with SCF mode
is ${{\lambda _{SCF}}(n)} = \Theta \left( {\frac{R_1}{{\log n}}} \right)$.
When $x > x_2^*$,
the expectation of ${{\lambda _{SCF}}(n)}$ is
\begin{equation}\label{eq_capacity_SCF_Mode}
E\left[ {{\lambda _{SCF}}(n)} \right] = \frac{{\Theta \left( {\frac{{1 - {{\left( {\frac{x}{L}} \right)}^3}}}{{{x^2}}}\frac{{\log n}}{{{t_0}}}{R_1}} \right)}}{{\Theta (\log n)}}
= \Theta( {\frac{{1 - {{\left( {\frac{x}{L}} \right)}^3}}}{{{x^2}}}\frac{1}{{{t_0}}}{R_1}}).
\end{equation}

\begin{Lemm}[\cite{Order_statistics, New_Zhiqing_WCNC}]
\label{lemm_range}
For an ascending sequence of random variables
${Y_1},{Y_2}, \cdots ,{Y_k}$ uniformly distributed in $[0,1]$,
the PDF of the range of $Y_r$ and $Y_{r+1}$ is
\begin{equation}\label{eq_distribution_01}
{f_{{W_r}}}({w_r}) = \frac{1}{{B(1,k)}}{(1 - {w_r})^{k - 1}},0 \le {w_r} \le 1,
\end{equation}
where
\begin{equation}
B(a,b) = \int_0^1 {{t^{a - 1}}{{(1 - t)}^{b - 1}}dt} ,a > 0,b > 0.
\end{equation}
\end{Lemm}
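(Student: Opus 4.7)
The plan is to reduce the computation of $f_{W_r}$ to an elementary volume calculation on a simplex, using the standard joint density of uniform order statistics. If $U_1,\dots,U_k$ are i.i.d.\ uniform on $[0,1]$ and $Y_1\le Y_2\le\cdots\le Y_k$ are their order statistics, then the joint density of $(Y_1,\dots,Y_k)$ is $k!$ on the ordered simplex $\{0\le y_1\le\cdots\le y_k\le 1\}$. This will be the starting point.

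First I would apply the linear change of variables $S_1=Y_1$ and $S_i=Y_i-Y_{i-1}$ for $2\le i\le k$. The Jacobian is $1$, so $(S_1,\dots,S_k)$ has joint density $k!$ on the region $\{s_i\ge 0,\ \sum_{i=1}^k s_i\le 1\}$. Since $W_r=S_{r+1}$, I then obtain $f_{W_r}$ by integrating out the remaining $k-1$ coordinates. Holding $s_{r+1}=w$ fixed, the admissible region for the other variables is $\{s_i\ge 0 \text{ for } i\ne r+1,\ \sum_{i\ne r+1}s_i\le 1-w\}$, a $(k-1)$-dimensional simplex of volume $(1-w)^{k-1}/(k-1)!$. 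Multiplying by $k!$ yields
\[
f_{W_r}(w)=k(1-w)^{k-1}=\frac{(1-w)^{k-1}}{B(1,k)},\qquad 0\le w\le 1,
\]
since $B(1,k)=1/k$, which is exactly the claimed density.

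The computation is routine and presents no real obstacle; the only subtlety worth highlighting is that the answer does not depend on $r$. This is not accidental: appending the boundary spacing $S_{k+1}=1-Y_k$ produces a vector $(S_1,\dots,S_{k+1})$ that is uniformly distributed on the unit $k$-simplex, hence exchangeable, so every $S_i$ shares the same Beta$(1,k)$ marginal. As an alternative route that bypasses the Jacobian step, one may invoke the equivalent representation $Y_i=(E_1+\cdots+E_i)/(E_1+\cdots+E_{k+1})$ with $E_i$ i.i.d.\ exponential, from which the Beta$(1,k)$ law of each spacing is immediate from the standard ratio-of-gammas identity.
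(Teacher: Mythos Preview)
Your proof is correct. The change of variables from $(Y_1,\dots,Y_k)$ to the spacings $(S_1,\dots,S_k)$, followed by marginalization over a $(k-1)$-simplex, is the standard and cleanest derivation of the spacing density, and your computation is accurate in every step, including the identification $B(1,k)=1/k$.

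As for comparison with the paper: there is nothing to compare against. The paper does not prove this lemma at all; it simply states the result and attributes it to the order-statistics textbook of David and Nagaraja and to a prior paper of the authors. Your write-up therefore supplies what the paper omits, namely a self-contained derivation. The additional remark on exchangeability of $(S_1,\dots,S_{k+1})$ and the alternative exponential-ratio construction are nice touches that explain \emph{why} the density is independent of $r$, something the bare statement leaves implicit.
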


\section{Proof of Theorem \ref{th_mobility_control_1}}
\label{app_mobility_control_1}

\begin{figure*}[!t]
\centering
\includegraphics[width=0.74\textwidth]{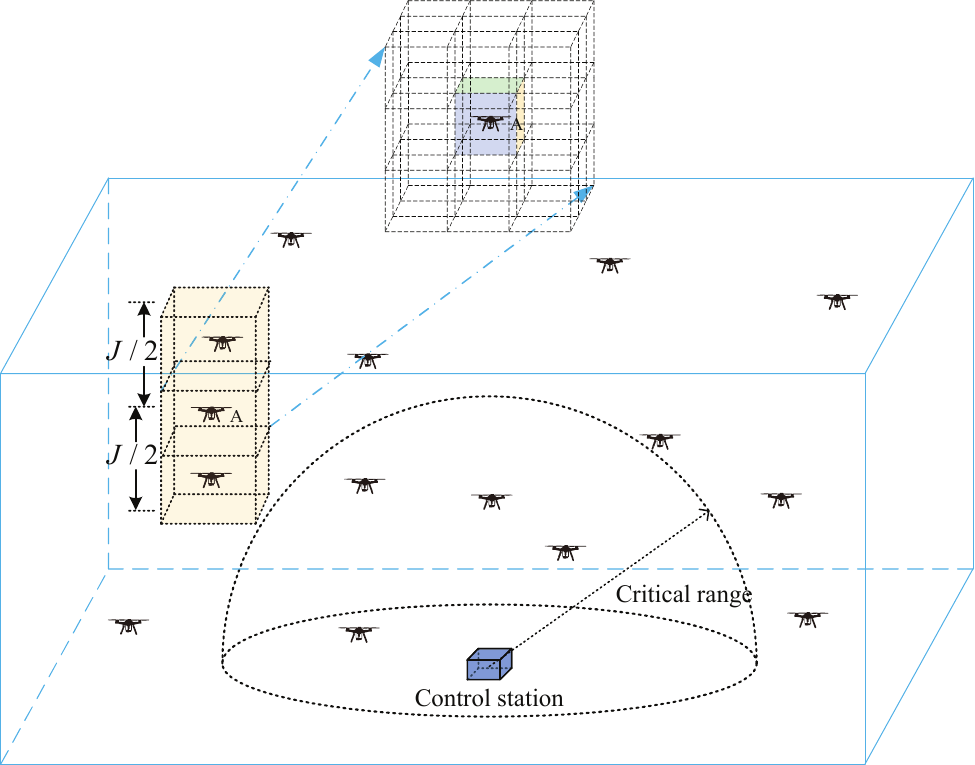}
\caption{The UAVs in the up-and-down trajectories.}
\label{fig_up_and_down}
\end{figure*}

With mobility control,
a UAV will encounter two kinds of
passing UAVs, namely,
the UAVs in up-and-down trajectories
and the UAVs in returning paths.
Theorem \ref{lemm_critical_range}
studies the
capacity of SCF mode contributed by
the UAVs in returning paths.
In this theorem, the capacity of SCF mode contributed by
the UAVs in up-and-down trajectories is studied.
As illustrated in Fig. \ref{fig_up_and_down},
when the UAVs in the shaded cuboid
fly following the up-and-down trajectories,
they will pass through the neighborhood of UAV A.
Hence with mobility control,
the number of potential returning UAVs that fly
through the neighborhood of UAV A
in up-and-down trajectories is
\begin{equation}\label{eq_NJ_for_comp}
N(J) = 9 \frac{{\kappa J}}{{{{\left( {{c_1}\frac{{\log n}}{n}} \right)}^{1/3}}}}{c_5}\log n,
\end{equation}
where $\kappa$ is a constant.

Define
\begin{equation}
{k_{th}} = \frac{{{t_0}}}{{\frac{{3 c_1^{1/3}}}{v}{{\left( {\frac{{\log n}}{n}} \right)}^{1/3}}}}.
\end{equation}

Similar to the
proof of Theorem \ref{lemm_critical_range},
if $N(J) \ge {k_{th}}\log {k_{th}}$,
the amount of data that
can be forwarded for the UAVs in a small cube by the UAVs
in up-and-down trajectories per unit time is ${{\lambda '}_{SCF,cube}}(n) = R_1$ \emph{w.h.p.}
The value of ${k_{th}}\log {k_{th}}$ is
\begin{equation}\label{eq_kth_logkth}
\begin{aligned}
& {k_{th}}\log {k_{th}} = \frac{{{t_0}v}}{{3c_1^{1/3}}}{\left( {\frac{n}{{\log n}}} \right)^{1/3}} \times\\
& \left( {\log \left( {\frac{{{t_0}v}}{{3c_1^{1/3}}}} \right) + \frac{1}{3}\log n - \frac{1}{3}\log \log n} \right).
\end{aligned}
\end{equation}

Comparing (\ref{eq_NJ_for_comp}) with
(\ref{eq_kth_logkth}),
it can be proved that (\ref{eq_NJ_for_comp}) and
(\ref{eq_kth_logkth}) have the same order.
Thus if $J = \Theta (1)$,
${{\lambda '}_{SCF,cube}}(n) = R_1$.
Note that the capacity of the UAVs in a small cube with SCF mode
is the summation
of the ${{\lambda '}_{SCF,cube}}(n)$ and the ${{\lambda}_{SCF,cube}}(n)$
in Theorem \ref{lemm_critical_range},
which is shared by $\Theta \left( {\log n} \right)$ UAVs.
Thus if $J = \Theta (1)$,
the per-node capacity of UAV networks with
SCF mode can be derived
as (\ref{eq_th_UAV_mobility_1_1}).

\section{Proof of Theorem \ref{th_delay}}
\label{app_delay_3D}

The delay of SCF mode consists
the waiting time for a returning
UAV and the time that the returning
UAV carries data
to the control station.
Define the distance between a UAV and the control
station as $X$, which is a random variable.
The probability
that a UAV falls in
the spherical ring with inner radius $x$
and outer radius $x + dx$ is as follows.
\begin{equation}\label{eq_PDF}
\begin{aligned}
& f(x)dx = \Pr \{ x \le X \le x + dx\}\\
& = \frac{{\frac{2}{3}\pi {{(x + dx)}^3} - \frac{2}{3}\pi {{\left( x \right)}^3}}}{{\frac{2}{3}\pi {L^3}}}\\
& = \frac{{{{(x + dx)}^3} - {{\left( x \right)}^3}}}{{{L^3}}} = \frac{{3{x^2}dx}}{{{L^3}}},
\end{aligned}
\end{equation}
where $f(x)$ is the probability density function (PDF)
and the terms $o(dx)$ in (\ref{eq_PDF}) is omitted.
Besides, for the UAV with distance $x$
away from the control station,
the delay to forward its
data to the control station by a returning UAV is ${\frac{x}{v}}$.

Then we study the waiting time
for a returning
UAV, which is the time
interval between two adjacent returning UAVs.
As illustrated in Fig. \ref{fig_banqiu},
$k$ UAVs are located in the shaded region of
the UAV with distance $x$ away from the control station,
which will return sequentially within time $t_0$.
For the UAV with distance $x$ from the control station,
the expectation of the delay with SCF mode
is
\begin{equation}\label{eq_expec_delay}
\begin{aligned}
& {D_{SCF}}(n) = \int_0^L {f(x)\frac{x}{v}dx}  + \int_0^L {f(x){\int_0^{{t_0}} {wg(w)dw} }dx} \\
& = \int_0^L {\frac{{3{x^2}}}{{{L^3}}}\frac{x}{v}dx}  + \int_0^L {\frac{{3{x^2}}}{{{L^3}}}\frac{{{t_0}}}{{k + 1}}dx},
\end{aligned}
\end{equation}
where $g(w)$ is provided in (\ref{eq_PDF_intervcal}).
Replacing the $k$ in (\ref{eq_expec_delay}) with the $k_u$ in (\ref{eq_ku}),
we have
\begin{equation}\label{eq_delay_expand}
\begin{aligned}
& {D_{SCF}}(n) = \int_0^L {\frac{{3{x^2}}}{{{L^3}}}\frac{x}{v}dx}  + \int_0^L {\frac{{3{x^2}}}{{{L^3}}}\frac{{{t_0}}}{{k + 1}}dx}\\
& = \frac{3}{4}\frac{L}{v} + \frac{{36{t_0}}}{{\pi nr_n^2}}\int_0^L {\frac{{{x^4}}}{{{L^3} - {{\left( {x - {r_n}} \right)}^3}}}dx}.
\end{aligned}
\end{equation}

Since we have
\begin{equation}
\begin{aligned}
& {L^3} - {\left( {x - {r_n}} \right)^3}\\
& = (L - x + {r_n})\left( {{L^2} + {{(x - {r_n})}^2} + L(x - {r_n})} \right)\\
& \ge (L - x + {r_n}){L^2},
\end{aligned}
\end{equation}
the following inequality is established.
\begin{equation}\label{eq_delay_expand1}
\begin{aligned}
& {D_{SCF}}(n) \le \frac{3}{4}\frac{L}{v} + \frac{{36{t_0}}}{{\pi nr_n^2{L^2}}}\int_0^L {\frac{{{x^4}}}{{(L - x + {r_n})}}dx} \\
& = \frac{3}{4}\frac{L}{v} + \frac{{36{t_0}}}{{\pi nr_n^2{L^2}}} \times\\
& \left( {\begin{array}{*{20}{l}}
{{{(L + {r_n})}^4}\log \frac{{L + {r_n}}}{{{r_n}}}}\\
{ - \frac{L}{{12}}(25{L^3} + 52{L^2}{r_n} + 42Lr_n^2 + 12r_n^3)}
\end{array}} \right)\\
& \buildrel \Delta \over = {D_{SCF,u}}(n).
\end{aligned}
\end{equation}

With some manipulations,
we have
\begin{equation}\label{eq_SCF_upper_bound}
\begin{aligned}
& {D_{SCF,u}}(n) \equiv \frac{3}{4}\frac{L}{v} + \frac{{36{t_0}}}{{\pi nr_n^2{L^2}}}\left( {L\log \frac{L}{{{r_n}}}} \right)\\
& = \frac{3}{4}\frac{L}{v} + \frac{{36{t_0}}}{{\pi nr_n^2L}}\left( {\log \frac{{2L}}{{3\sqrt 3 c_1^{1/3}}} + \frac{1}{3}\log n - \frac{1}{3}\log \log n} \right)\\
& \equiv \frac{3}{4}\frac{L}{v} + \Theta \left( {{t_0}{{\left( {\frac{{\log n}}{n}} \right)}^{1/3}}} \right).
\end{aligned}
\end{equation}

This theorem is proved.

\section{Proof of Theorem \ref{lemm_critical_range2}}
\label{app_critical_2D}

As illustrated in Fig. \ref{fig_yuan},
a returning UAV flies
through the neighborhood of UAV B with time
$\frac{{c_3^{1/2}{c_6}}}{v}{\left( {\frac{{\log n}}{n}} \right)^{1/2}}$,
where $c_6$ is a constant and $0 < {c_6} \le 3\sqrt 2$.
Defining
\begin{equation}
{k_{th}} = \frac{{{t'_0}}}{{\frac{{c_3^{1/2}{c_6}}}{v}{{\left( {\frac{{\log n}}{n}} \right)}^{1/2}}}},
\end{equation}
where ${t'_0} = {t_0} - \frac{2h}{v}$.
Similar to the proof of Theorem \ref{lemm_critical_range},
if $k \ge {k_{th}}\log {k_{th}}$,
there is at least one returning UAV
flying through the neighborhood of UAV B \emph{w.h.p.}
Then the UAVs in a small cube can share the following
capacity with SCF mode.
\begin{equation}\label{eq_capacity_SCF2}
\lambda_{SCF,cell} (n) = R_2.
\end{equation}

If $k \le \frac{{{k_{th}}}}{{\log {k_{th}}}}$,
there is at most one returning UAV
flying through the neighborhood of UAV B \emph{w.h.p.}
Therefore the expectation of the capacity
for the UAVs in a small cell is
\begin{equation}\label{eq_lambda_SCF2}
E\left[ {{\lambda _{SCF,cell}}(n)} \right] = \frac{{c_3^{1/2}{c_6}}}{v}{\left( {\frac{{\log n}}{n}} \right)^{1/2}}{R_2}\frac{k}{{{t'_0}}}.
\end{equation}

In order to calculate the value of $k$, the area
of the shaded region in Fig. \ref{fig_yuan} is derived.
The area of the sector in Fig. \ref{fig_yuan} with radius $K$ is
\begin{equation}
{S_c} = K^2\arcsin \left( {\frac{{{r_n}}}{x}} \right),
\end{equation}
where ${r_n} = \frac{{3\sqrt 2 }}{2}{\xi_n}$ is
the radius of the circumscribed circle
of UAV B's neighborhood.
The difference of the area between the
sector with radius $K$ and the sector with radius $x + r_n$ is
the lower bound of the area of shaded region.
Similarly, the difference of the area between the
sector with radius $K$ and the sector with radius $x - r_n$ is
the upper bound of the area of shaded region.
Because $\mathop {\lim }\limits_{n \to \infty } {r_n} = 0$,
the term $r_n$ can be omitted and we
adopt the upper bound of
the area of the shaded region as follows.
\begin{equation}\label{eq_vol_upper2}
\begin{aligned}
{S_u} & = \left( {{K^2} - {{(x - {r_n})}^2}} \right)\arcsin \left( {\frac{{{r_n}}}{x}} \right)\\
& \equiv \left( {{K^2} - {x^2}} \right)\arcsin \left( {\frac{{{r_n}}}{x}} \right).
\end{aligned}
\end{equation}

The term $\frac{{{r_n}}}{x}$
tends to $0$ when $n$ tends to infinity.
Using
Taylor series expansion for the
term $\arcsin \left( {\frac{{{r_n}}}{x}} \right)$,
we have
\begin{equation}\label{eq_area_shaded_region}
{S_u} = \left( {{K^2} - {x^2}} \right)\left( {\frac{{{r_n}}}{x} + o\left( {\frac{{{r_n}}}{x}} \right)} \right).
\end{equation}

The shaded region of Fig. \ref{fig_banqiu}
averagely contains $k_u$ UAVs.
\begin{equation}\label{eq_ku2}
{k_u} = n\frac{{{S_u}}}{{4{K^2}}} = \frac{n}{{4{K^2}}}\left( {{K^2} - {x^2}} \right)\left( {\frac{{{r_n}}}{x} + o\left( {\frac{{{r_n}}}{x}} \right)} \right).
\end{equation}

Letting ${k_u} \ge {k_{th}}\log {k_{th}}$, we have
\begin{equation}\label{eq_58}
\frac{1}{x}\left( {{K^2} - {x^2}} \right) \ge \frac{{4{K^2}}}{{n{r_n}}}{k_{th}}\log {k_{th}}.
\end{equation}

Defining
\begin{equation}
u = \frac{{4{K^2}}}{{n{r_n}}}{k_{th}}\log {k_{th}}
\end{equation}
and solving (\ref{eq_58}),
we have
\begin{equation}
x \le \frac{1}{2}\sqrt {4{K^2} + {u^2}}  - \frac{1}{2}u \buildrel \Delta \over = x_1^*.
\end{equation}

The $x_1^*$ is defined as the critical range.
It can be verified that $u = \Theta (1)$.
Thus $x_1^* = \Theta (1)$.
%
%
%
%
Letting $k \le \frac{{{k_{th}}}}{{\log {k_{th}}}}$, we have
\begin{equation}\label{eq_iasuidausi}
\frac{1}{x}\left( {{K^2} - {x^2}} \right) \le \gamma,
\end{equation}
where
\begin{equation}
\gamma  = \frac{{\frac{{8{K^2}}}{{3\sqrt 2 }}\frac{{{{t'}_0}v}}{{{c_3}{c_6}\log n}}}}{{\log \frac{{{{t'}_0}v}}{{c_3^{1/2}{c_6}}} + \frac{1}{2}\log n - \frac{1}{2}\log \log n}} = \Theta \left( {\frac{1}{{{{(\log n)}^2}}}} \right).
\end{equation}

Solving the inequality (\ref{eq_iasuidausi}),
we have
\begin{equation}
x \ge K - \Theta \left( {\frac{1}{{{{(\log n)}^2}}}} \right) \buildrel \Delta \over = x_2^*.
\end{equation}

When $x > x_2^*$, we have
\begin{equation}\label{eq_capacity_within_critical2}
E\left[ {{\lambda _{SCF,cell}}(n)} \right] = \frac{{3\sqrt 2 {c_3}{c_6}}}{{8v}}\frac{{\left( {1 - {{\left( {\frac{x}{K}} \right)}^2}} \right)}}{x}\frac{{\log n}}{{{t_0}}}{R_2} \le {R_2}.
\end{equation}

According to Lemma \ref{lemma_number_of_node},
$\lambda_{SCF,cell} (n)$ is shared by $\Theta \left( {\log n} \right)$ UAVs.
When $x \le x_1^*$, the per-node capacity of UAV networks with
SCF mode
is $\lambda_{SCF} (n) = \Theta \left( {\frac{R_2}{{\log n}}} \right)$.
When $x > x_2^*$, the expectation of $\lambda_{SCF} (n)$ is
\begin{equation}
E\left[ {{\lambda _{SCF}}(n)} \right] = \frac{{{\lambda _{SCF,cell}}(n)}}{{\Theta \left( {\log n} \right)}} = \Theta \left( {\frac{{\left( {1 - {{\left( {\frac{x}{K}} \right)}^2}} \right)}}{x}\frac{1}{{{t'_0}}}{R_2}} \right).
\end{equation}

\section{Proof of Theorem \ref{th_mobility_control_1_2D}}
\label{app_mobility_control_1_2D}

A UAV will encounter two kinds of
passing UAVs with mobility control, namely,
the UAVs in left-and-right trajectories
and the UAVs in returning paths.
The number of potential passing UAVs
in left-and-right trajectories is
\begin{equation}
N(J) = 3 \frac{{\kappa J}}{{{{\left( {{c_3}\frac{{\log n}}{n}} \right)}^{1/2}}}}{c_9}\log n,
\end{equation}
where $\kappa$ is a constant and ${c_9}\log n$ denotes the number
of UAVs within a small cell.
Define
\begin{equation}
{k_{th}} = \frac{{{t_0}}}{{\frac{{3 c_3^{1/2}}}{v}{{\left( {\frac{{\log n}}{n}} \right)}^{1/2}}}}.
\end{equation}

Similar to the proof of Theorem \ref{th_mobility_control_1},
it can be verified that $N(J) = {k_{th}}\log {k_{th}}$ if $J$
is a constant. Thus
Theorem \ref{th_mobility_control_1_2D} is proved.

\section{Proof of Theorem \ref{th_delay_2D}}
\label{app_delay_2D}

Similar to the proof of Theorem \ref{th_delay},
the delay of SCF mode consists
the waiting time for a returning
UAV and the time that the returning
UAV carries data to the control station.
Denote the distance between a UAV
and the center of the plane as $X$.
The probability
that a UAV falls in
the ring with inner radius $x$
and outer radius $x + dx$ is
\begin{equation}\label{eq_probability_fall_in_ring}
\begin{aligned}
& f(x)dx  = \Pr \{ x \le X \le x + dx\}\\
& = \frac{{\pi {{(x + dx)}^2} - \pi {x^2}}}{{\pi {K^2}}} = \frac{{2xdx}}{{{K^2}}},
\end{aligned}
\end{equation}
where the terms $o(dx)$ is omitted
in (\ref{eq_probability_fall_in_ring}).
The time that a UAV flies
to the
control station is $\frac{x}{v} + \frac{h}{v}$.

We investigate the waiting time
for a returning
UAV, which is the time
interval between two adjacent returning UAVs.
The shaded region in Fig. \ref{fig_yuan}
contains $k$ UAVs,
which will return sequentially
within time $t_0$.
The form of the
PDF of the time interval between two adjacent returning UAVs
is the same to (\ref{eq_PDF_intervcal}), where the order of $k$
is provided in (\ref{eq_ku2}).
Thus for the UAV with distance $x$ away
from the center of the plane,
the expectation of the delay is
\begin{equation}\label{eq_expec_delay2}
\begin{aligned}
& {D_{SCF}}(n) = \int_0^K {f(x)\frac{x}{v}dx}  + \int_0^L {f(x){\int_0^{{t_0}} {wg(w)dw} }dx} \\
& + \frac{h}{v} = \int_0^K {\frac{{2x}}{{{K^2}}}\frac{x}{v}dx}  + \int_0^K {\frac{{2x}}{{{K^2}}}\frac{{{t_0}}}{{k + 1}}dx} + \frac{h}{v}.
\end{aligned}
\end{equation}

Substituting the $k_u$ in (\ref{eq_ku2}) into the $k$
in (\ref{eq_expec_delay2}),
we have
\begin{equation}\label{eq_delay_expand2}
\begin{aligned}
& {D_{SCF}}(n) = \int_0^K {\frac{{2x}}{{{K^2}}}\frac{x}{v}dx}  + \frac{h}{v}\\
& + \underbrace {\int_0^K {\frac{{2x}}{{{K^2}}}\frac{{{t_0}}}{{\frac{n}{{4{K^2}}}\left( {{K^2} - {{\left( {x - {r_n}} \right)}^2}} \right)\frac{{{r_n}}}{x}}}dx} }_{(b)},
\end{aligned}
\end{equation}
where the upper bound of the term $(b)$ is
\begin{equation}
\begin{aligned}
(b) & = \int_0^K {\frac{{2x}}{{{K^2}}}\frac{{{t_0}}}{{\frac{n}{{4{K^2}}}\left( {K - x + {r_n}} \right)\left( {K + x - {r_n}} \right)\frac{{{r_n}}}{x}}}dx}\\
& \le \int_0^K {\frac{{2x}}{{{K^2}}}\frac{{{t_0}}}{{\frac{n}{{4{K^2}}}\left( {K - x + {r_n}} \right)K\frac{{{r_n}}}{x}}}dx} \\
& = \underbrace {\frac{{8{t_0}}}{{nK{r_n}}}\left( {{{(K + {r_n})}^2}\log \frac{{K + {r_n}}}{{{r_n}}} - \frac{K}{2}(3K + 2{r_n})} \right)}_{(c)}.
\end{aligned}
\end{equation}

With some manipulations similar to
the derivation of (\ref{eq_SCF_upper_bound}), the term $(c)$
is as follows.
\begin{equation}
(c)  \equiv  \Theta \left( {{t_0}{{\left( {\frac{{\log n}}{n}} \right)}^{1/2}}} \right).
\end{equation}

The delay of SCF mode is derived as (\ref{eq_delay_SCF_2D_per}).
\end{appendices}

\end{document}